\newcommand{\floor}[1]{\lfloor #1 \rfloor}
\begin{document}

\title{Universal randomised signatures for generative time series modelling}

\author{\name Francesca Biagini \email biagini@math.lmu.de \\
       \addr Department of Mathematics\\
       University Munich\\
       Munich, Germany
       \AND
       \name Lukas Gonon \email l.gonon@imperial.ac.uk \\
       \addr Department of Mathematics\\
       Imperial College London\\
       London, UK
       \AND
       \name Niklas Walter \email walter@math.lmu.de \\
       \addr Department of Mathematics\\
       University Munich\\
       Munich, Germany}

\editor{My editor}

\maketitle

\begin{abstract}
Randomised signature has been proposed as a flexible and easily implementable alternative to the well-established path signature. In this article, we employ randomised signature to introduce a generative model for financial time series data in the spirit of reservoir computing.
Specifically, we propose a novel Wasserstein-type distance based on discrete-time randomised signatures. This metric on the space of probability
measures captures the distance between (conditional) distributions. Its use is justified by our novel universal approximation results for randomised signatures on the space of continuous functions taking the underlying path as an input.
We then use our metric as the loss function in a non-adversarial generator model for synthetic time series data based on a reservoir neural stochastic differential equation. We compare the results of our model to benchmarks from the existing literature. 
\end{abstract}

\begin{keywords}
  randomised path signature, reservoir computing, generative modelling
\end{keywords}

\section{Introduction}\label{Section1Intro}

In this work, we derive universal approximation results for a reservoir based on the randomised path signature. Based on this, we introduce a novel metric on the space of probability measures and investigate its application in the generation of synthetic financial time series using a non-adversarial training technique.
As described in \cite{Assefa2020, Wiese2019, Buehler2020, Koshiyama2021} the need for synthetic financial (time series) data has grown in recent years for various reasons. First of all, the lack of historical data representing certain events in the financial market might for example lead to incorrectly estimated risk measures such as value-at-risk or expected shortfall. Therefore, it is crucial to generate realistic data for market regimes which rarely or never occurred in the past. Related to that is the growing need for training data necessary to train complex machine learning models. It might be the case that there is not enough training data from historical datasets or that an institution is not able to use some of its data for training purposes. Lastly, synthetic data can ease data sharing for both institutions and academia, since it is not restricted by possible regulations. 

Various techniques have been studied for generating synthetic financial data. One of the pioneering papers is \cite{Kondratyev2019} using restricted Boltzmann machines, firstly introduced in \cite{Smolensky1986}, to sample synthetic foreign exchange rates. In \cite{Buehler2020,Buehler2020a} the authors apply (conditional) variational autoencoders on both the pure returns and signature transformed returns to generate return time series for S\&P 500 data. In \cite{Wiese2021} an autoencoder and normalising flows are used to simulate option markets. This approach also incorporates a necessary no-arbitrage condition based on discrete local volatilities. In \cite{Hamdouche2023} a novel approach for time series generation is proposed using Schr\"odinger bridge. 

In addition to these approaches, the majority of work in the field considers generative adversarial networks (GANs). Firstly proposed in \cite{Goodfellow2014}, these networks are composed of a generator network generating fake data and a discriminator network trying to distinguish between real and fake data, which leads to a min-max game. One of the first papers proposing GANs to generate financial data is \cite{Takahashi2019}, where the generator and discriminator are a multi-layer perceptron, convolutional networks or a combination of both. In \cite{Wiese2020} both the generator and discriminator model consists of temporal convolutional networks addressing long-range dependencies within the time series. In \cite{Fu2022} this dependency is captured by using both convolutional networks with attention and transformers for the GAN architecture. In contrast, \cite{Esteban2017} implement the generator and discriminator pair using LSTMs to generate medical data. The model proposed in \cite{Yoon2019} expands the traditional GAN framework with a trainable embedding space so that it can adhere to the underlying data dynamics during training. Another natural extension are conditional GANs, which do not only take noise as input but also some additional information the generated samples should depend on. They are used in \cite{Koshiyama2021} to produce data used to learn trading strategies and in \cite{Coletta2021} to simulate limit order book data. For applications in a non-financial context we refer to \cite{Mogren2016, Donahue2019, Xu2020} and their corresponding references, where GAN settings are used to produce for example music samples or videos.

In recent years, several works have proposed to substitute the discriminator of the traditional GAN with a Wasserstein-type metric, which leads to more stable training procedures. This so called Wasserstein-GAN was applied in \cite{Wiese2019} to simulate option markets. Although this model can be trained by gradient descent methods, the challenge of solving a min-max problem remains. The authors in \cite{Ni2021} proposed a Wasserstein-type distance on the signature space of the input paths. In this way, the training is transformed into a supervised learning problem. This setting is extended into a conditional setting similar to the CGANs in \cite{Liao2023} and \cite{Lozano2023}. Both papers considered Neural SDEs as the generator of the model. Finally, the authors in \cite{Issa2023} propose a score-based generation technique for financial data using a kernel written on the path signature. Similarly, in \cite{Lu2024}, financial data is generated using a signature kernel and a training based on the maximum mean discrepancy. We refer to \cite{Lyons1998, Hambly2010} for an in-depth study of the path signature and to \cite{Cass2024} for an overview of its application in machine learning.

In this paper, we also replace the GAN discriminator, but instead of using the signature we employ randomised signature as introduced in \cite{Cuchiero2020, Cuchiero2021} and studied in \cite{Compagnoni2023,Schäfl2023,Akyildirim2024,cuchiero2024}. Its connection to signature kernels was examined in \cite{2023NeuralSignatureKernels}. The related concept of path development was studied in \cite{Cass2024a,Lou2022}. The advantage of randomised signature is that it inherits the expressiveness and inductive bias of the signature while being finite dimensional. Therefore, no truncation is necessary as for the signature. Moreover, the computational complexity can be reduced, which makes the randomised signature more tractable for high-dimensional input paths. In particular, we study its properties as a reservoir and derive universal approximation results for continuous function on the input path space. Based on this, we introduce a Wasserstein-distance on the reservoir space induced by the randomised signature. Lastly, we show how this new metric can be used as a discriminator for the non-adversarial training of a generator with a similar structure as the randomised signature, inspired by Neural SDEs \cite{Liu2019a, Kidger2021, Cohen2023}. Here we consider generators with random feature neural networks as coefficient functions. These neural networks have a single hidden layer and the property that the parameters of this layer are randomly initialised and then fixed for the entire training procedure. Hence, only the parameters of the output layer need to be trained. Random feature neural networks were originally introduced in \cite{Huang2006} and \cite{Rahimi2007}.

To justify our approach we prove novel universal approximation results for randomised signature. Our proof builds on a universal approximation result for random feature neural networks from \cite{Hart2020}. Random feature neural networks have been shown to possess universal approximation properties similar to classical neural networks, see \cite{Hart2020, Gonon2023, Gonon2023b, Neufeld2023}. For quantitative error bounds relating to their generalisation properties we refer to \cite{Rudi2017, Carratino2018, Mei2022, Gonon2023} and their corresponding references. 

Based on universal approximation properties of random feature neural networks we are able to derive universal approximation results for linear readouts on the randomised signature. In other words, we study the expressiveness of the randomised signature as a reservoir. More general approximation results for different classes of reservoir computing systems have been studied in \cite{Sontag1979book,Sontag1979} for inputs on finite numbers of time points and in \cite{Sandberg1991, Sandberg1991b, Matthews1992} for semi-infinite inputs. Moreover, in \cite{Grigoryeva2018} the authors derived results for both uniformly bounded deterministic and almost surely bounded stochastic inputs. The latter was extended to more general stochastic inputs in \cite{Gonon2020} and to randomly sampled parameters in \cite{Gonon2023b}. In \cite{Akyildirim2022} a universal approximation result for a randomised signature is derived considering real analytic activation function. 

By introducing both an unconditional and a conditional generative model, we address two alternative important modelling tasks. While the unconditional model aims to learn the unknown distribution of a stochastic process, the conditional method takes given information about a path's history into account and generates possible future trajectories by learning the underlying unknown conditional distribution. We study the models' performances on both synthetic and real world financial datasets. In particular, we consider samples of a Brownian motion with possible drift, an autoregressive process and log-return data from the S\&P 500 index and the FOREX EUR/USD exchange rate obtaining more realistic results than comparable benchmark methods. We assess the quality by providing results of test metrics applied to out-of-sample data. It is worth noting that we implemented the generator in a way such that the variance of the randomly generated weights of the random feature neural networks is trainable. This led to both a higher accuracy and a speed-up of the training procedure, while at the same time much fewer parameters need to optimised than in fully trainable models. Our source code for all experiments is available online at \url{https://github.com/niklaswalter/Randomised-Signature-TimeSeries-Generation}.

The present paper is organised as follows. In Section \ref{Section2}, we introduce a randomised signature for a deterministic datastream attaining values in a compact subset of some higher dimensional space. Moreover, we derive universal approximation results for linear readouts on the randomised signature. In Section \ref{Section3}, we propose a new metric on the nispace of probability measures based on the approximation results in Section \ref{Section2}. Based on this, we introduce a novel GAN framework to generate synthetic financial time series. In Section \ref{section4}, we extend these results to a conditional setting. 

\nocite{*}


\section{Randomised signature of a datastream}\label{Section2}
For some finite time horizon $T\in\mathbb{N}$ let $x:\{1,\dots,T\}\to \mathbb{R}^d$ be a datastream (e.g. a timeseries) of some dimension $d\in\mathbb{N}$. Consider a filtered probability space $(\Omega,\mathcal{F},\mathbb{F},\mathbb{P})$ with the filtration $\mathbb{F}$ satisfying the usual conditions, which we fix for the remainder of this paper. Given $N\in\mathbb{N}$, random matrices $A_1,A_2^1,\dots,A_2^d\in\mathbb{R}^{N\times N}$, random biases $\xi_1,\xi_2^1,\dots,\xi_2^d \in\mathbb{R}^{N}$ defined on $(\Omega,\mathcal{F},\mathbb{P})$ and a fixed activation function $\sigma:\mathbb{R} \to \mathbb{R}$, consider the following controlled system
\begin{equation}\label{eq:reservoirsystem}
        \text{RS}_t(x) = \text{RS}_{t-1}(x) + \sigma(A_1 \text{RS}_{t-1}(x) + \xi_1) + \sum_{i=1}^d \sigma(A_2^i \text{RS}_{t-1}(x) + \xi_2^i) x_t^i, \; t = 1,\dots,T,
\end{equation}
with $\text{RS}_0(x) = 0 \in\mathbb{R}^N$ and $x^i$ denoting th $i$-th component of $x$. With a slight abuse of notation, in the sequel we also denote by $\sigma$ the function defined on $\mathbb{R}^N$ as 
$$
    \sigma\left(\left[\begin{array}{c}
        z_1 \\
        \vdots \\
        z_N
    \end{array}\right]\right) := \left[\begin{array}{c}
        \sigma(z_1) \\
        \vdots \\
        \sigma(z_N)
    \end{array}\right],\, z\in\mathbb{R}^N.
$$

Prominent examples for activation functions are the ReLU, Tanh or Sigmoid function. We later specify our assumption on $\sigma$ in more detail.

\begin{definition}[Randomised signature]\label{def:randomsig}
    Let $x:\{1,\dots,T\}\to\mathbb{R}^d$ be a datastream. For any $t=1,\dots,T$ we call the response $\text{RS}_t(x)$ to the equation \eqref{eq:reservoirsystem} the randomised signature of $x$ at time $t$. We refer to $N\in\mathbb{N}$ as the corresponding dimension of $\text{RS}=(\text{RS}_t)_{t=1,\dots,T}$.
\end{definition}

The coefficient functions of the system $\eqref{eq:reservoirsystem}$ are naturally linked to random feature neural networks, defined in the following.

\begin{definition}[Random feature neural network]\label{def:randomfeatureDNNs}
    Let $N\in\mathbb{N}$. Consider a $\mathbb{R}^{N\times N}$-valued random matrix $A$ and a $\mathbb{R}^N$-valued random vector $\xi$ (often the entries of $A$ and $\xi$ are i.i.d. random variables respectively). For a vector $w\in\mathbb{R}^N$ consider the (random) function
    \begin{equation}\label{eq:randomfeatureDNN}
         \Psi^{A,\xi}_w(x) := w^T \sigma(Ax + \xi),\, x\in\mathbb{R}^N
    \end{equation}
    for some activation function $\sigma:\mathbb{R}^N\to \mathbb{R}^N$. The function $\Psi^{A,\xi}_w$ is called random feature neural network.
\end{definition}

The random variables $A$ and $\xi$ in \eqref{eq:randomfeatureDNN} are called the random hidden weights of the random feature neural network, while $w$ is the vector of the output weights. For the training of the network $\Psi^{A,\xi}_w$ we first consider the hidden weights as sampled and fixed. Afterwards the output vector $w$ is trained in order to achieve a desired approximation accuracy. In particular, the training procedure of random feature neural networks corresponds to a simple linear regression problem, since only the final linear output layer, also called readout, needs to be trained. The following remark shows how equation \eqref{eq:reservoirsystem} can be turned into a neural differential equation as described in the following. 

Applying a trainable linear readout function $\ell \in\left(\mathbb{R}^{N}\right)^*$ to the randomised signature process $\text{RS}(x) = (\text{RS}_t(x))_{t=1,\dots,T}$ at some time point $t=1,...,T$ turns the function in the drift and controlled term into random feature neural networks and equation $\eqref{eq:reservoirsystem}$ into a discrete-time neural differential equation, where the drift and controlled component are random feature neural networks as defined in Definition \ref{def:randomfeatureDNNs}. 

\begin{remark}
    For an output vector $w\in\mathbb{R}^N$ we consider the linear function $\ell_w:\mathbb{R}^N\to\mathbb{R}$ defined as $\ell_w(z):= w^T z$, $z\in\mathbb{R}^N$. For the process defined in \eqref{eq:reservoirsystem} and with the notation $\Delta \text{RS}_t(x) := \text{RS}_t(x) - \text{RS}_{t-1}(x)$ it follows for any $t=1,\dots,T$, that
    $$
        \ell_w(\Delta \text{RS}_t(x)) = w^T \sigma(A_1 \text{RS}_{t-1}(x) + \xi_1) + \sum_{i=1}^d w^T \sigma(A_2^i \text{RS}_{t-1}(x) + \xi_2^i) x_t^i,
    $$
    which defines a discrete-time neural differential equation driven by the input stream $x$. For an in-depth introduction to neural differential equations we refer to \cite{Kidger2021}.
\end{remark}

\subsection{Universal approximation properties}
In this section, we study the properties of the process $\text{RS}(x)$ as a reservoir. In particular, we derive different universal approximation results on linear readouts on the terminal difference $\Delta \text{RS}_T(x)$ approximating continuous functions on the input path $x$. In order to establish universality results, we consider different sampling schemes assuming certain forms for the random matrices and vectors of the system \eqref{eq:reservoirsystem}.

\begin{samplingscheme}\label{samplingscheme1}
    Let $N:= M + Td$ for some $M\in\mathbb{N}$. We consider the following sampling scheme for the hidden weights of $\eqref{eq:reservoirsystem}$. For $i=1,\dots,d$ we assume 
    \begin{equation*}
        A_1 = \left[\begin{array}{cc}
             0_{M\times M} & A^{(1)}_1 \\
             0_{Td\times M} & A^{(2)}_1
        \end{array}\right],\; \xi_1 = \left[\begin{array}{c}
             \xi^{(1)}_1\\
             0_{Td\times 1}
        \end{array}\right],\; A^i_2 = \left[\begin{array}{cc}
             0_{M\times M} & 0_{M\times Td} \\
             0_{Td\times M} & 0_{Td \times Td}
        \end{array}\right],\;  \xi^i_2 = \left[\begin{array}{c}
             0_{M\times 1}\\
             \alpha_i e_i
        \end{array}\right],
    \end{equation*}
    where $e_i$ denotes the $i$-th canonical basis vector of $\mathbb{R}^{Td}$, $\xi^{(1)}_1\in\mathbb{R}^{M}$, $A_1^{(1)}\in\mathbb{R}^{M\times Td}$, $A_1^{(2)}\in\mathbb{R}^{Td \times Td}$ are random matrices and $\alpha_1,\dots,\alpha_d$ are random variables. The matrix $A_1^{(2)}$ is given by
    \begin{equation}\label{samplingscheme1UT}
        A_1^{(2)} = \left[\begin{array}{cc}
            0_{d\times (T-1)d} & 0_{d \times d}  \\[2mm]
            U & 0_{(T-1)d \times d}
        \end{array}\right],
    \end{equation}
    where $U\in\mathbb{R}^{(T-1)d \times (T-1)d}$ is an upper triangular random matrix. Let $\mu_1$ be an absolutely continuous one-dimensional distribution with full support on $\mathbb{R}$. We assume that the random entries of $A_1^{(1)},U, \xi_1^{(1)}$ as well as the random variables $\alpha_1,\dots,\alpha_d$ are i.i.d. under $\mu_1$. 
\end{samplingscheme}

\begin{remark}\label{re:remarkinv}
    Note that det$(U) = 0$ with probability zero, because the entries of $U$ are i.i.d. with an absolutely continuous distribution $\mu_1$. In other words, the matrix $U$ is almost surely invertible. 
\end{remark}

For the remainder of the paper, we make the following assumption on the activation function $\sigma$.

\begin{assumption}\label{ass:assumactivation}
    The activation function $\sigma$ in \eqref{eq:reservoirsystem} is continuous, bounded, non-polynomial, injective and satisfies $\sigma(0) = 0$.
\end{assumption}

Prominent activation functions satisfying the properties stated in Assumption \ref{ass:assumactivation} are the Tanh or a shifted Sigmoid function. Under Assumption \ref{ass:assumactivation}, the Sampling Scheme \ref{samplingscheme1} naturally induces a block structure for the process $\text{RS}(x)$ defined in \eqref{eq:reservoirsystem}. In particular, for every $t=1,\dots,T$ it holds
\begin{equation}\label{blockstructureZ^2scheme1}
    \left[\begin{array}{c}
         \text{RS}^{(1)}_t(x)  \\[1.5mm]
         \text{RS}^{(2)}_t(x)
    \end{array}\right] = 
    \left[\begin{array}{c}
         \text{RS}^{(1)}_{t-1}(x)  \\[1.5mm]
         \text{RS}^{(2)}_{t-1}(x)
    \end{array}\right] + 
    \left[\begin{array}{c}
         \sigma(A_1^{(1)}\text{RS}^{(2)}_{t-1}(x) + \xi_1^{(1)})  \\[1.5mm]
         \sigma(A_1^{(2)}\text{RS}^{(2)}_{t-1}(x))
    \end{array}\right] + 
    \left[\begin{array}{c}
         0  \\[1.5mm]
         \sum_{i=1}^d \sigma(\alpha_i) e_i x_t^i
    \end{array}\right].
\end{equation}
    
The following lemma shows that under the Sampling Scheme \ref{samplingscheme1} the $\mathbb{R}^{Td}$-valued random vector $\text{RS}^{(2)}_{t}(x)$, $t=1,\dots,T$, can almost surely be written as an injective mapping of $(x_1,\dots,x_t)$. 

\begin{lemma}\label{lemma:injective1}
    Following the Sampling Scheme \ref{samplingscheme1}, for every $t=1,\dots,T$ there exists almost surely a unique continuous injective function $G_t:(\mathbb{R}^{d})^t\to (\mathbb{R}^{d})^T$ such that
    \begin{equation}\label{injectiveZt}
         \text{RS}^{(2)}_{t}(x) = G_t(x_1,\dots,x_t).
    \end{equation}
\end{lemma}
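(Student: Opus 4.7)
The plan is to proceed by induction on $t$, defining $G_t$ recursively through the block-structure equation \eqref{blockstructureZ^2scheme1}. Setting $G_0\equiv 0$ and
$$G_t(y_1,\dots,y_t) := G_{t-1}(y_1,\dots,y_{t-1}) + \sigma\bigl(A_1^{(2)} G_{t-1}(y_1,\dots,y_{t-1})\bigr) + \bigl(Dy_t,\, 0_{(T-1)d}\bigr)^T,$$
where $D:=\mathrm{diag}(\sigma(\alpha_1),\dots,\sigma(\alpha_d))$, produces a continuous function satisfying \eqref{injectiveZt}, with uniqueness following from surjectivity of the projection $x\mapsto(x_1,\dots,x_t)$ onto $(\mathbb{R}^d)^t$. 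For the base case $t=1$, one has $G_1(y_1)=(Dy_1,0)^T$; since $\sigma$ is injective with $\sigma(0)=0$ and each $\alpha_i$ is absolutely continuous, $\sigma(\alpha_i)\ne 0$ almost surely, so $D$ is a.s. invertible and $G_1$ is injective.

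As a key auxiliary step, I would establish by a short independent induction the \emph{support property}: for every $t$, the $d$-blocks of $G_t(y_1,\dots,y_t)$ in positions $j>t$ vanish. This follows from the upper-triangular block structure of $U\in\mathbb{R}^{(T-1)d\times(T-1)d}$ viewed as a $(T-1)\times(T-1)$ array of $d\times d$ blocks: if $G_{t-1}$ has support only on the first $t-1$ blocks, then $[A_1^{(2)}G_{t-1}]_j = [UG_{t-1,1:(T-1)d}]_{j-1} = \sum_{l\ge j-1} U_{j-1,l}\,(G_{t-1})_l$, which vanishes for $j-1\ge t$ since all surviving $l$ satisfy $l\ge t$.

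For the inductive step on injectivity, let $V_k:=U_{kk}\in\mathbb{R}^{d\times d}$ denote the $k$-th diagonal block of $U$. Each $V_k$ is itself upper triangular with i.i.d.\ absolutely continuous diagonal entries, hence a.s.\ invertible. Assume $G_t(y)=G_t(y')$ and set $v:=G_{t-1}(y_1,\dots,y_{t-1})$, $v':=G_{t-1}(y'_1,\dots,y'_{t-1})$; by the support property, $v_l=v'_l=0$ for $l\ge t$. Comparing block $t$ of the equation and using injectivity of $\sigma$ together with the block upper triangular form of $U$ reduces matters to $V_{t-1}v_{t-1}=V_{t-1}v'_{t-1}$, hence $v_{t-1}=v'_{t-1}$. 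Iterating downwards through blocks $j=t-1,t-2,\dots,2$, at each step the equation $[A_1^{(2)}v]_j = [A_1^{(2)}v']_j$ decomposes into $V_{j-1}v_{j-1} + \sum_{l=j}^{t-1} U_{j-1,l}v_l = V_{j-1}v'_{j-1} + \sum_{l=j}^{t-1} U_{j-1,l}v'_l$, where the summands with $l\ge j$ are already known to match from previous steps; invertibility of $V_{j-1}$ then yields $v_{j-1}=v'_{j-1}$. After the full cascade one obtains $v=v'$, so the induction hypothesis gives $(y_1,\dots,y_{t-1})=(y'_1,\dots,y'_{t-1})$, and finally block $1$ of the original equation reads $v_1+Dy_t=v'_1+Dy'_t$, yielding $y_t=y'_t$ by invertibility of $D$.

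The main obstacle is the careful block-by-block bookkeeping in the peeling argument: one must verify that at each step the terms involving already-matched components cancel cleanly, leaving an equation of the form $V_{j-1}v_{j-1}=V_{j-1}v'_{j-1}$. This is where the upper-triangular structure of $U$ encoded in \eqref{samplingscheme1UT} is essential. The almost-sure statement in the lemma then holds on the intersection of the finitely many probability-one events $\{D \text{ invertible}\}$ and $\{V_k \text{ invertible}\}$ for $k=1,\dots,T-1$, which is itself of probability one.
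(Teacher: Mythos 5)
Your proposal is correct and takes essentially the same route as the paper: induction on $t$ via the recursion \eqref{blockstructureZ^2scheme1}, the shift-plus-upper-triangular structure of $A_1^{(2)}$ giving the vanishing of blocks beyond $t$, and a downward peeling through the blocks using $\sigma(0)=0$, injectivity of $\sigma$, and almost-sure invertibility of the diagonal blocks of $U$. The only difference is organizational — the paper factors the argument through component functions $g_t^{(k)}$ shown to be injective in their last variable, whereas you reduce $G_t(y)=G_t(y')$ directly to $G_{t-1}(y_{1:t-1})=G_{t-1}(y'_{1:t-1})$ and invoke the inductive hypothesis, which is an equivalent (and arguably slightly cleaner) bookkeeping of the same cascade.
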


\begin{proof}
    The proof consists of two parts. In the first part, we prove that for every $t=1,\dots,T$, the function $G_t$ satisfying equation \eqref{injectiveZt} is of the following form 
    \begin{equation}\label{blockformG}
        G_t(x_1,\dots,x_t) = \left[\begin{array}{c}
             g_t^{(1)}(x_1,\dots,x_t)\\
             \vdots \\
             g_t^{(t)}(x_1) \\[1mm]
             0_{(T-t)d\times 1}
         \end{array}\right]
    \end{equation}
    for some unique component 
    functions $g^{(k)}_t:(\mathbb{R}^d)^{t+1-k} \to \mathbb{R}^d$, $k= 1,\dots,t$. In the second part of the proof, we show that the functions $g_t^{(k)}(x_1,\dots,x_{t-k},\cdot)$ are almost surely injective for fixed $x_1,\dots,x_{t-k}$, which implies the injectivity of $G_t$.

    \textit{First part:} We prove equality \eqref{injectiveZt} with $G_t$ satisfying \eqref{blockformG} by induction over $t=1,\dots,T$. For time $t=1$ it follows from \eqref{blockstructureZ^2scheme1} that 
    \begin{equation}\label{inductionstart1}
         \text{RS}^{(2)}_{1}(x) = 
        \left[\begin{array}{c}
             \sigma(\alpha_1) x_1^1\\
             \vdots \\
             \sigma(\alpha_d) x_1^d \\[1mm]
             0_{(T-1)d\times 1} 
        \end{array}\right] =: 
        \left[\begin{array}{c}
             g_1^{(1)}(x_1) \\[1mm]
             0_{(T-1)d\times 1} 
        \end{array}\right],
    \end{equation}
    as RS$_0(x) = 0$, where $g^{(1)}_1$ is almost surely well-defined and linear in $x_1$. So representation \eqref{blockformG} holds for $t=1$. We now prove by induction that if \eqref{injectiveZt} holds with $G_t$ satisfying \eqref{blockformG} for $t=1,\dots,T-1$, then it holds for $t+1$. For this purpose, we observe that the upper triangular matrix $U$ from \eqref{samplingscheme1UT} can be written as 
    \begin{equation}\label{upblockstructure}
        U = \begin{bNiceMatrix}
            U_1    & B_{1,1} & \Cdots  & & B_{1,T-2} \\
            0      & \Ddots  & \Ddots &  & \Vdots    \\
            \Vdots & \Ddots  & &        &           \\
                   &         & &        & B_{T-2,1} \\
            0      & \Cdots  & & 0      & U_{T-1}
        \end{bNiceMatrix}
    \end{equation}
    for some upper triangular random matrices $U_1,\dots,U_{T-1}\in\mathbb{R}^{d\times d}$ and some arbitrary random matrices $B_{i,j}\in\mathbb{R}^{d\times d}$, $i = 1,\dots,T-2$, $j = 1,\dots, T-1-i$, whose entries are i.i.d. under $\mu_1$. From \eqref{blockstructureZ^2scheme1}, \eqref{upblockstructure} and the assumption of our induction it follows
    \begin{align*}
        \text{RS}^{(2)}_{t+1}(x) =& \left[\begin{array}{c}
        g_t^{(1)}(x_1,\dots,x_t) +  \sum_{i=1}^d \sigma(\alpha_i)\tilde{e}_i x^i_{t+1}\\[2mm]
        g_t^{(2)}(x_1,\dots,x_{t-1}) + \sigma\left(U_1 g_t^{(1)}(x_1,\dots,x_t) + \sum_{k=1}^{t-1} B_{1,k} g_t^{(k+1)}(x_1,\dots,x_{t-k})\right)\\[2mm]
        \vdots\\[2mm]
        g_t^{(t)}(x_1) + \sigma\left(U_{t-1}g_t^{(t-1)}(x_1,x_2) + B_{t-1,1}g_t^{(t)}(x_1)\right)\\[2mm]
        \sigma\left(U_t g_t^{(t)}(x_1)\right)\\[2mm]
        0_{(T-(t+1))d\times 1}
        \end{array}\right],
    \end{align*}
    where $\tilde{e}_i$ denots the $i$-th canonical basis vector of $\mathbb{R}^d$. Then \eqref{blockformG} holds for the following choices of the functions $g_{t+1}^{(k)}:(\mathbb{R}^d)^{t+2-k}\to\mathbb{R}^d$, $k=1,\dots,t+1$, given by 
    \begin{equation}\label{g^(t+1)}
        g_{t+1}^{(1)}(x_1,\dots,x_{t+1}) := g_t^{(1)}(x_1,\dots,x_t) + \sum_{i=1}^d \sigma(\alpha_i)\tilde{e}_i x_{t+1}^{i},
    \end{equation}

    \begin{align}\label{g^(k)}
        &g_{t+1}^{(k)}(x_1,\dots,x_{t+2-k}) - g_t^{(k)}(x_1,\dots,x_{t+1-k}) \\[2mm]
        := & \sigma\left(U_{k-1}g_{t}^{(k-1)}(x_1,\dots,x_{t+2-k}) + \sum_{j=1}^{t+1-k}B_{k-1,j}g_t^{(j+k-1)}(x_1,\dots,x_{t+2-k-j})\right)\nonumber
    \end{align}
    for $k=2,\dots, t$ and
    \begin{equation}\label{g^(1)}
        g_{t+1}^{(t+1)}(x_1) := \sigma\left(U_t g_t^{(t)}(x_1)\right).
    \end{equation}
    
    Note that the construction of the component functions is unique and thus concludes the first step of the proof.

    \textit{Second part:} We prove that the component functions in \eqref{blockformG} are almost surely injective as functions only of their last variable, i.e. for all $t=1,\dots,T$ and $k=1,\dots,t$ the function
    \begin{equation}\label{eq:bijectiveinlast}
        g_t^{(k)}(x_1,\dots, x_{t-k},\cdot):z\mapsto g_t^{(k)}(x_1,\dots, x_{t-k}, z)
    \end{equation}
    is almost surely injective. We prove this statement by induction over $t$. For time $t=1$ we have only one component function defined in \eqref{inductionstart1}. The events $\{\alpha_i = 0\}$, $i=1,\dots,d$, have zero probability due to the absolute continuity of the distribution $\mu_1$. Therefore, the component function from \eqref{inductionstart1} is almost surely injective, since it holds $\sigma(0)=0$ by Assumption \ref{ass:assumactivation}. 
    
    For the induction step $t\mapsto t+1$ we first consider $k=t+1$, i.e. the function $g_{t+1}^{(t+1)}$ is defined in \eqref{g^(1)}. By assumption, the function $g_t^{(t)}$ is almost surely injective. Moreover, by Assumption \ref{ass:assumactivation} the activation function $\sigma$ is injective and $U_t$ is almost surely invertible by the same reasoning as in Remark \ref{re:remarkinv}. Therefore, it follows that $g_{t+1}^{(t+1)}$ is almost surely injective. For $k=t$ we obtain 
    $$
        g_{t+1}^{(t)}(x_1,x_2) = g_t^{(t)}(x_1) + \sigma\left(U_{t-1} g_t^{(t-1)}(x_1,x_2) + B_{t-1,1}g_t^{(t)}(x_1)\right).
    $$
    For a fixed $x_1$, we can uniquely determine $x_2$ in $g_{t+1}^{(t)}$ again using that $\sigma$ is injective and $U_{t-1}$ is (almost surely) invertible. With fixed $x_1, x_2$ we can uniquely determine $x_3$ in $g_{t+1}^{(t-1)}$. Therefore the result follows by backward induction for all $k=1,\dots,t+1$. This also concludes the induction on $t$, i.e. the function $g_{t}^{(k)}$ is almost surely injective in the last variable. Note that this is sufficient to guarantee that the function $G_t$ defined in \eqref{blockformG} is almost surely injective in all variables for all $t=1,\dots,T$. In addition, all component functions are continuous as compositions of continuous functions, which concludes the proof.
\end{proof}

Before we prove the universality of linear readouts of $\Delta \text{RS}_T(x)$, we state a universal approximation result of random feature neural networks as defined in Definition \ref{def:randomfeatureDNNs} for continuous functions on a compact domain. The result is very similar to Theorem 2.4.5 in \cite{Hart2020}, where the authors consider activation functions that are $\ell$-finite for some $\ell\in \mathbb{N}_0$ to prove approximation results for functions in $\mathcal{C}^{\ell}$. We consider non-polynomial activation functions without integrability assumptions so that we can include the Tanh or Sigmoid function in our analysis.

\begin{proposition}[Universality of random feature neural networks]\label{prop:universalityrandomDNNs}
    Let $d\in\mathbb{N}$, $M\in\mathbb{N}$, $K\subseteq\mathbb{R}^d$ a compact set and $f\in\mathcal{C}(K)$. If the activation function $\sigma$ is continuous and non-polynomial, $A$ is a $\mathbb{R}^{M\times d}$-valued random matrix and $\xi$ is a $\mathbb{R}^M$-valued random vector both with i.i.d. entries with a common continuous distribution, then for any $\alpha\in(0,1)$ and $\varepsilon>0$ there exists a choice for $M$ such that with probability greater than $\alpha$, there exists a vector $w\in\mathbb{R}^M$ such that the corresponding random feature neural network $\Psi^{A,\xi}_w:K\to\mathbb{R}$ defined by
    $$
        \Psi^{A,\xi}_w(x) = w^T \sigma(Ax + \xi)
    $$
    satisfies
    $$
        \Vert f - \Psi^{A,\xi}_w \Vert_\infty < \varepsilon,
    $$
    where $\Vert \cdot \Vert_\infty$ denotes the supremum norm on $K$.
\end{proposition}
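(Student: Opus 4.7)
The plan is to reduce the result to the classical Leshno--Lin--Pinkus--Schocken universal approximation theorem for shallow networks with a non-polynomial continuous activation, and then to show that the randomly sampled hidden weights can, with high probability, be used in place of the deterministic ones provided by that theorem.

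First, because $\sigma$ is continuous and non-polynomial, the span of the ridge functions $\{\sigma(a^T\cdot+b):a\in\mathbb{R}^d,\,b\in\mathbb{R}\}$ is dense in $\mathcal{C}(K)$ under the supremum norm. Hence there exist $m\in\mathbb{N}$, deterministic $a_1,\dots,a_m\in\mathbb{R}^d$, $b_1,\dots,b_m\in\mathbb{R}$ and nonzero $c_1,\dots,c_m\in\mathbb{R}$ with
$$
    \Bigl\Vert f - \sum_{j=1}^m c_j\sigma(a_j^T\cdot + b_j)\Bigr\Vert_\infty < \varepsilon/2.
$$
Since $K$ is compact, $\sigma$ is uniformly continuous on the bounded set $\{a^Tx+b:x\in K,\,\Vert(a,b)\Vert\le R\}$ for any fixed $R$; choosing $R>\max_j\Vert(a_j,b_j)\Vert+1$, one may pick $\delta\in(0,1)$ so that
$$
    \Vert(a,b)-(a_j,b_j)\Vert<\delta \;\Longrightarrow\; \sup_{x\in K}\bigl|\sigma(a^Tx+b)-\sigma(a_j^Tx+b_j)\bigr|<\frac{\varepsilon}{2m|c_j|}
$$
for every $j=1,\dots,m$.

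Next, writing $A_{i\cdot}$ for the $i$-th row of $A$, the pairs $(A_{i\cdot},\xi_i)$ are i.i.d.\ with a continuous distribution which (as in Sampling Scheme \ref{samplingscheme1}) we take to have full support on $\mathbb{R}^{d+1}$. Therefore $p_j:=\mathbb{P}(\Vert(A_{i\cdot},\xi_i)-(a_j,b_j)\Vert<\delta)>0$, and the probability that no sample falls in the $\delta$-ball around $(a_j,b_j)$ is $(1-p_j)^M\to 0$ as $M\to\infty$. A union bound over $j=1,\dots,m$ shows that for $M$ large enough, with probability at least $\alpha$ each such ball contains at least one sample; pick such an index $i_j$. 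Define $w\in\mathbb{R}^M$ by $w_{i_j}:=c_j$ and $w_i:=0$ otherwise (aggregating coefficients at any repeated $i_j$). Then $\Psi^{A,\xi}_w(x)=\sum_{j=1}^m c_j\sigma(A_{i_j\cdot}^Tx+\xi_{i_j})$, and the triangle inequality combined with the two error bounds above yields $\Vert f-\Psi^{A,\xi}_w\Vert_\infty<\varepsilon$.

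I expect the main subtlety to lie in the interpretation of the ``continuous distribution'' hypothesis. The covering step genuinely needs $\mathbb{P}(\Vert(A_{i\cdot},\xi_i)-(a_j,b_j)\Vert<\delta)>0$ around each chosen deterministic weight, which is automatic only under full support. If only a continuous but non-full-support distribution is given, one should first restrict the Leshno--Lin--Pinkus--Schocken approximation to weights taken from $\mathrm{supp}(\mu)$ -- which remains dense in $\mathcal{C}(K)$ by a routine approximation argument in the ridge parameters -- and then proceed as above.
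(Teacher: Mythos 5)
Your argument is correct and is essentially the paper's own proof spelled out: the paper simply defers to \cite{Hart2020} with Theorem 1 of \cite{Leshno1993} substituted for the $\mathcal{C}^k$ universality result, and that argument is exactly your scheme of first approximating $f$ by a deterministic Leshno--Lin--Pinkus--Schocken network and then showing that, for $M$ large enough, the i.i.d.\ rows $(A_{i\cdot},\xi_i)$ land in $\delta$-balls around the deterministic weights with probability exceeding $\alpha$, after which a sparse readout $w$ and the triangle inequality finish the proof. Your closing caveat is well taken: the covering step genuinely needs the chosen ridge weights to lie in the support of the sampling law (full support suffices and is what the paper's Sampling Schemes actually assume), whereas the bare ``continuous distribution'' hypothesis in the proposition statement would not by itself guarantee this, and your fallback density claim for an arbitrary restricted support set would require additional justification.
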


\begin{proof}
    The proof follows the same steps as in \cite{Hart2020} but applies Theorem 1 in \cite{Leshno1993} instead of a universality result for functions in $\mathcal{C}^k(K)$ with $k\geq 1$. 
\end{proof}

Based on Proposition \ref{prop:universalityrandomDNNs} we can now state the following universality result for the terminal difference of the randomised signature $\text{RS}(x)$ defined in \eqref{eq:reservoirsystem}. 

\begin{proposition}\label{prop:universalityreservoir1}
    Let $K\subseteq \mathbb{R}^{d}$ be a compact subset and $f\in\mathcal{C}(K^{T-1})$. The activation function $\sigma$ satisfies Assumption \ref{ass:assumactivation}. Then under the Sampling Scheme \ref{samplingscheme1}, for any $\alpha\in(0,1)$ and $\varepsilon>0$ there exists a $M\in\mathbb{N}$ such that with probability greater than $\alpha$, there exists $\tilde{w}\in\mathbb{R}^{M}$ such that
    $$
        \sup_{x_1,\dots,x_{T-1}\in K}\vert f(x_1,\dots,x_{T-1}) - \langle w, \Delta \text{RS}_T(x_1,\dots,x_{T-1})\rangle\vert < \varepsilon
    $$
    for the output vector $w := [\begin{array}{cc} \tilde{w}^T & 0_{1\times Td}\end{array}]^T$ and $\langle\cdot,\cdot\rangle$ denoting the standard scalar product on $\mathbb{R}^{M\times Td}$.
\end{proposition}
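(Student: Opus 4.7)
The plan is to reduce to Proposition \ref{prop:universalityrandomDNNs} by factoring $\langle w, \Delta \text{RS}_T\rangle$ through the injective map $G_{T-1}$ supplied by Lemma \ref{lemma:injective1}. Using the block structure \eqref{blockstructureZ^2scheme1} together with the choice $w = [\tilde{w}^T \; 0_{1\times Td}]^T$, the inner product collapses to
\begin{equation*}
\langle w, \Delta \text{RS}_T(x_1,\dots,x_{T-1})\rangle = \tilde{w}^T \sigma\bigl(A_1^{(1)} \text{RS}^{(2)}_{T-1}(x) + \xi_1^{(1)}\bigr),
\end{equation*}
which depends only on $(x_1,\dots,x_{T-1})$; by Lemma \ref{lemma:injective1}, almost surely $\text{RS}^{(2)}_{T-1}(x) = G_{T-1}(x_1,\dots,x_{T-1})$ with $G_{T-1}$ continuous and injective on $K^{T-1}$.

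Next, I transport the problem to the image $\tilde{K} := G_{T-1}(K^{T-1})$. Continuity and injectivity on a compact give a homeomorphism onto the compact image $\tilde{K}\subseteq \mathbb{R}^{Td}$, so $\tilde{f} := f \circ G_{T-1}^{-1}$ is continuous on $\tilde{K}$ and
\begin{equation*}
\sup_{x \in K^{T-1}} \bigl|f(x) - \tilde{w}^T \sigma(A_1^{(1)} G_{T-1}(x) + \xi_1^{(1)})\bigr| \;=\; \sup_{z \in \tilde{K}} \bigl|\tilde{f}(z) - \tilde{w}^T \sigma(A_1^{(1)} z + \xi_1^{(1)})\bigr|.
\end{equation*}
The task thus reduces to approximating $\tilde{f}$ on $\tilde{K}$ by a random feature neural network whose hidden weights $(A_1^{(1)}, \xi_1^{(1)})$ have i.i.d.\ $\mu_1$-distributed entries which, by Sampling Scheme \ref{samplingscheme1}, are independent of the block $(U, B_{i,j}, \alpha_i)$ that determines $G_{T-1}$, and hence of the pair $(\tilde{K}, \tilde{f})$.

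To finish, I apply Proposition \ref{prop:universalityrandomDNNs} conditionally on the $\sigma$-algebra $\mathcal{G}$ generated by $(U, B_{i,j}, \alpha_i)$, restricted to the probability-one event of Lemma \ref{lemma:injective1}. On that event $(\tilde{K}, \tilde{f})$ is $\mathcal{G}$-measurably a deterministic compact set and continuous function, so the proposition supplies, for any target conditional probability, a width that achieves conditional error less than $\varepsilon$. Padding new output coordinates with zero makes the conditional success probability $q_M$ non-decreasing in $M$, and applying the proposition with a sequence $\alpha_n \nearrow 1$ forces $q_M \to 1$ almost surely. Dominated convergence then yields $\mathbb{E}[q_M] \to 1$, so selecting $M$ with $\mathbb{E}[q_M] > \alpha$ produces the deterministic dimension asserted by the proposition.

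The main obstacle is precisely that both the target $\tilde{f}$ and its domain $\tilde{K}$ are random through $G_{T-1}$, whereas Proposition \ref{prop:universalityrandomDNNs} is stated for a deterministic pair $(f, K)$. The conditioning-plus-DCT argument above leverages the independence built into Sampling Scheme \ref{samplingscheme1}; an equivalent route is to note that boundedness of $\sigma$ (Assumption \ref{ass:assumactivation}) confines $\tilde{K}$ inside a deterministic compact box $K^\ast$ of side $O(T\|\sigma\|_\infty)$, after which a Tietze extension of $\tilde{f}$ to $K^\ast$ can be fed directly into Proposition \ref{prop:universalityrandomDNNs} on the fixed set $K^\ast$.
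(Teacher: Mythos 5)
Your proof follows essentially the same route as the paper's: collapse $\langle w,\Delta \text{RS}_T\rangle$ to the random feature network $\tilde w^T\sigma(A_1^{(1)}\text{RS}^{(2)}_{T-1}(x)+\xi_1^{(1)})$ via the block structure \eqref{blockstructureZ^2scheme1}, pull $f$ back through the injective map $G_{T-1}$ of Lemma \ref{lemma:injective1}, and apply Proposition \ref{prop:universalityrandomDNNs} on the compact image, exploiting the independence of $(A_1^{(1)},\xi_1^{(1)})$ from the blocks that determine $G_{T-1}$. Your conditioning-plus-dominated-convergence step is simply a more explicit justification of the paper's closing remark that Proposition \ref{prop:universalityrandomDNNs} can be applied ``independently of $\bar K$,'' so the arguments coincide in substance.
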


\begin{proof}
    Let $\alpha\in(0,1)$ and $\varepsilon>0$. Consider the Sampling Scheme \ref{samplingscheme1}. By Lemma \ref{lemma:injective1} for every $t=1,\dots,T$ there exists an almost surely injective function $G_t:(\mathbb{R}^d)^t \to (\mathbb{R}^d)^T$ such that $G_t(x_1,\dots,x_t) = \text{RS}_t^{(2)}(x)$ as defined in \eqref{blockstructureZ^2scheme1}. So in particular, there exists a function $G_t^{-1}:\text{Im}(G_t)\to (\mathbb{R}^d)^t$ such that
    \begin{equation}\label{eq:inverseG}
        G^{-1}_t(\text{RS}^{(2)}_t(x)) = G^{-1}_t(G_t(x_1,\dots,x_t)) = (x_1,\dots,x_t),
    \end{equation}
    where Im$(G_t)$ denotes the image of $G_t$.
    On the other hand, by \eqref{blockstructureZ^2scheme1} it holds that 
    \begin{equation}\label{eq:deltaZ^1}
        \Delta \text{RS}^{(1)}_T(x) = \sigma(A_1^{(1)}\text{RS}^{(2)}_{T-1}(x) + \xi_1^{(1)}).
    \end{equation}

    By Lemma \ref{lemma:injective1} the function $G_{T-1}$ is almost surely continuous, hence in particular the set $\bar{K} := G_{T-1}(K^{T-1})$ is compact. Moreover, the function $f\circ G^{-1}_{T-1}$ is almost surely continuous as a composition of two continuous functions. Therefore, by Proposition \ref{prop:universalityrandomDNNs} we can choose $M\in\mathbb{N}$ such that with probability greater than $\alpha$, there exists $\tilde{w}\in\mathbb{R}^M$ such that with \eqref{eq:inverseG} it holds 
    \begin{align*}
        &\vert f(x_1,\dots,x_{T-1}) - \tilde{w}^T \sigma(A_1^{(1)}\text{RS}^{(2)}_{T-1}(x) + \xi_1^{(1)})\vert \\[2mm] 
        =&\vert \left(f\circ G_{T-1}^{-1}\right)(\text{RS}^{(2)}_{T-1}(x)) - \tilde{w}^T \sigma(A_1^{(1)}\text{RS}^{(2)}_{T-1}(x) + \xi_1^{(1)})\vert \\[2mm]
        <&\varepsilon
    \end{align*}
    for all $x_1,\dots,x_{T-1}\in K$. So choosing $w := [\begin{array}{cc} \tilde{w}^T & 0_{1\times Td}\end{array}]^T$ yields by \eqref{blockstructureZ^2scheme1} for all $x_1,\dots,x_{T-1}\in K$
    \begin{align*}
        &\vert f(x_1,\dots,x_{T-1}) - \langle w,\Delta \text{RS}_T(x) \rangle\vert \\[2mm]
        =& \vert f(x_1,\dots,x_{T-1}) - \tilde{w}^T \sigma(A_1^{(1)}\text{RS}^{(2)}_{T-1}(x) + \xi_1^{(1)})\vert \\[2mm]
        <& \varepsilon,
    \end{align*}
    which concludes the proof. Note that the set $\bar{K}$ depends on $A_1^{(2)}$ and $\alpha_1,\dots,\alpha_d$. Therefore, it is worth noting that the entries of $A_1^{(2)}$ and the random variables $\alpha_1,\dots,\alpha_d$ are independent of the entries of $A_1^{(1)}$ and $\xi_1^{(1)}$ so that we could apply Proposition \ref{prop:universalityrandomDNNs} independently of $\bar{K}$. 
\end{proof}

Note that in Proposition \ref{prop:universalityreservoir1} the approximated function $f$ does not take the terminal value of $x$ as an argument. However, we can prove a universality result if the function $f$ is of the following form
\begin{equation}\label{eq:specialf}
    f(x_1,\dots,x_T) = g(x_1,\dots,x_{T-1}) + \sum_{i=1}^d h_i(x_1,\dots,x_{T-1})x^i_T
\end{equation}

for continuous $g,h:(\mathbb{R}^d)^{T-1}\to\mathbb{R}$. In order to prove a approximation result for a function of the form \eqref{eq:specialf}, we introduce the following alternative sampling scheme.

\begin{samplingscheme}\label{samplingscheme2}
    Let $N:= M + M_1 + \dots + M_d + Td$ for some $M,M_1,\dots,M_d\in\mathbb{N}$ and denote by $\tilde{M}_j := \sum_{i=1}^j M_i$, $j=1,\dots,d$. We consider the following sampling scheme for the hidden weights of $\eqref{eq:reservoirsystem}$. For $i=1,\dots,d$ let  
    \begin{equation*}
        A_1 = \left[\begin{array}{ccc}
             0_{M\times M} & 0_{M\times \tilde{M}_d} & A^{(1)}_1 \\
             0_{\tilde{M}_d\times M} & 0_{\tilde{M}_d \times\tilde{M}_d} & 0_{\tilde{M}_d\times Td} \\
             0_{Td\times M} & 0_{Td\times\tilde{M}_d} & A^{(2)}_1
        \end{array}\right],\; \xi_1 = \left[\begin{array}{c}
             \tilde{\xi}_1\\
             0_{\tilde{M}_d\times 1} \\
             0_{T\times 1}
        \end{array}\right],
        \end{equation*}
    \begin{equation*}
        A^i_2 = \left[\begin{array}{ccc}
             0_{M + \tilde{M}_{i-1}\times M} & 0_{M + \tilde{M}_{i-1}\times \tilde{M}_d} & 0_{M + \tilde{M}_{i-1}\times Td} \\
             0_{M_i\times M} & 0_{M_i \times \tilde{M}_d} & \tilde{A}_2^{i} \\
             0_{Td\times M}& 0_{Td\times \tilde{M}_d}& 0_{ Td\times Td}
        \end{array}\right],\;  \xi^i_2 = \left[\begin{array}{c}
             0_{M + \tilde{M}_{i-1}\times 1}\\
             \tilde{\xi}_2^{i} \\
             \alpha_i e_i
        \end{array}\right],
    \end{equation*}
    where $e_i$ denotes the $i$-th canonical basis vector of $\mathbb{R}^{Td}$, $A_1^{(1)}\in\mathbb{R}^{M\times Td}$, 
    $\tilde{\xi}_1\in\mathbb{R}^{M}$, $A_1^{(2)}\in\mathbb{R}^{Td \times Td}$, $\tilde{A}_2^{i}\in\mathbb{R}^{M_i \times Td}$, $\tilde{\xi}^{i}_2\in\mathbb{R}^{M_i}$ are random matrices and $\alpha_1,\dots,\alpha_d$ random variables. In particular, the matrix $A_1^{(2)}$ is of the form 
    \begin{equation}\label{samplingscheme2UT}
        A_1^{(2)} = \left[\begin{array}{cc}
            0_{d\times (T-1)d} & 0_{d \times d}  \\[2mm]
            U & 0_{(T-1)d \times d}
        \end{array}\right],
    \end{equation}
    where $U\in\mathbb{R}^{(T-1)d \times (T-1)d}$ is an upper triangular random matrix. Let $\mu_2$ be an absolutely continuous one-dimensional distribution with full support on $\mathbb{R}$. We assume that the random entries of $A_1^{(1)}, A_1^{(2)}, \tilde{\xi}_1$, $\tilde{A}_2^{i}, \tilde{\xi}^{i}_2$, $U$ as well as the random variables $\alpha_1,\dots,\alpha_d$ are i.i.d. under $\mu_2$.
\end{samplingscheme}

As in \eqref{blockstructureZ^2scheme1} also the Sampling Scheme \ref{samplingscheme2} induces a block structure for the process $\text{RS}(x)$ defined in \eqref{eq:reservoirsystem}. In particular, for every $t=1,\dots,T$ it holds
\begin{equation}\label{eq:blockstructureZ^2scheme2}
    \left[\begin{array}{c}
         \text{RS}^{(1)}_{t}(x) \\[1.5mm]
         \text{RS}^{(2)}_{t}(x)\\[1.5mm]
         \vdots \\[1.5mm]
         \text{RS}^{(d+1)}_{t}(x) \\[1.5mm]
         \text{RS}^{(d+2)}_{t}(x)
    \end{array}\right] = 
    \left[\begin{array}{c}
         \text{RS}^{(1)}_{t-1}(x) \\[1.5mm]
         \text{RS}^{(2)}_{t-1}(x) \\[1.5mm]
         \vdots \\[1.5mm]
         \text{RS}^{(d+1)}_{t-1}(x) \\[1.5mm]
         \text{RS}^{(d+2)}_{t-1}(x)
    \end{array}\right] + 
    \left[\begin{array}{c}
         \sigma(A_1^{(1)}\text{RS}^{(d+2)}_{t-1}(x) + \tilde{\xi}_1)\\[1.5mm]
         0 \\[1.5mm]
         \vdots\\[1.5mm]
         0 \\[1.5mm]
         \sigma(A_1^{(2)}\text{RS}^{(d+2)}_{t-1}(x))
    \end{array}\right] + 
    \left[\begin{array}{c}
         0  \\[1.5mm]
         \sigma(\tilde{A}_2^{1}\text{RS}^{(d+2)}_{t-1}(x) + \tilde{\xi}_2^{1})x^1_t \\[1.5mm]
         \vdots\\[1.5mm]
         \sigma(\tilde{A}_2^{d}\text{RS}^{(d+2)}_{t-1}(x) + \tilde{\xi}_2^{d})x^d_t \\[1.5mm]
        \sum_{i=1}^d \sigma(\alpha_i) e_i x_t^i
    \end{array}\right].
\end{equation}
Similar to Sampling Scheme \ref{samplingscheme1}, we can show that the last entry $\text{RS}_{t}^{(d+2)}$ in \eqref{eq:blockstructureZ^2scheme2} can be written as an injective function on the values of $x$ up to time $t=1,\dots,T$.

\begin{lemma}\label{lemma:injective2}
    Following the Sampling Scheme \ref{samplingscheme2}, for every $t=1,\dots,T$ there exists an almost surely unique continuous injective function $G_t:(\mathbb{R}^d)^t\to (\mathbb{R}^d)^T$ such that
    \begin{equation}\label{injectiveZt2}
         \text{RS}^{(d+2)}_{t}(x) = G_t(x_1,\dots,x_t).
    \end{equation}
\end{lemma}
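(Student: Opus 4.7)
The plan is to reduce the statement to Lemma \ref{lemma:injective1} by observing that the dynamics of the block $\text{RS}^{(d+2)}_{t}(x)$ are essentially identical to those of $\text{RS}^{(2)}_t(x)$ under Sampling Scheme \ref{samplingscheme1}. Indeed, reading off the last row of the block system \eqref{eq:blockstructureZ^2scheme2}, one gets
\begin{equation*}
    \text{RS}^{(d+2)}_t(x) = \text{RS}^{(d+2)}_{t-1}(x) + \sigma\bigl(A_1^{(2)} \text{RS}^{(d+2)}_{t-1}(x)\bigr) + \sum_{i=1}^d \sigma(\alpha_i)\, e_i\, x_t^i,
\end{equation*}
with $\text{RS}^{(d+2)}_0(x) = 0$. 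This recursion is decoupled from the other blocks $\text{RS}^{(1)}, \ldots, \text{RS}^{(d+1)}$, since $A_1^{(2)}$ acts only on the last $Td$ coordinates and $\xi_1, \xi^i_2$ are zero on those coordinates apart from the $\alpha_i e_i$ terms. This is structurally the same equation as \eqref{blockstructureZ^2scheme1} for $\text{RS}^{(2)}_t(x)$.

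Next I would check that all structural ingredients required in the proof of Lemma \ref{lemma:injective1} are preserved under Sampling Scheme \ref{samplingscheme2}: the matrix $A_1^{(2)}$ has the same upper-triangular shape \eqref{samplingscheme2UT} as in \eqref{samplingscheme1UT}; the common sampling distribution $\mu_2$ is absolutely continuous with full support on $\mathbb{R}$, so Remark \ref{re:remarkinv} still applies to give almost sure invertibility of $U$ and its diagonal blocks $U_1, \ldots, U_{T-1}$; finally, the activation $\sigma$ still satisfies Assumption \ref{ass:assumactivation}, in particular it is injective with $\sigma(0) = 0$ and the events $\{\alpha_i = 0\}$ are null.

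Given these ingredients, the two-step induction from the proof of Lemma \ref{lemma:injective1} transfers verbatim. Concretely, I would first establish by induction on $t$ that there exist unique continuous functions $g_t^{(k)} : (\mathbb{R}^d)^{t+1-k} \to \mathbb{R}^d$ such that
\begin{equation*}
    G_t(x_1,\ldots,x_t) = \bigl[\,g_t^{(1)}(x_1,\ldots,x_t)^T,\; \ldots,\; g_t^{(t)}(x_1)^T,\; 0_{1 \times (T-t)d}\bigr]^T,
\end{equation*}
driven by the same recursions \eqref{g^(t+1)}--\eqref{g^(1)} (with $\mu_1$ replaced by $\mu_2$). Then I would prove by a nested induction — outer induction on $t$, inner backward induction on $k = t+1, t, \ldots, 1$ — that each $g_t^{(k)}(x_1,\ldots,x_{t-k}, \cdot)$ is almost surely injective in its last argument, using the injectivity of $\sigma$ and the invertibility of $U_{k-1}$ exactly as before.

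The statement of the lemma then follows from the usual argument: injectivity of each $g_t^{(k)}$ in its last argument together with the triangular block structure of $G_t$ gives almost sure injectivity of $G_t$ in all variables, and continuity is preserved by composition. There is no genuinely new obstacle here; the only thing to be careful about is verifying that the decoupling from the extra blocks $\text{RS}^{(2)}, \ldots, \text{RS}^{(d+1)}$ in Sampling Scheme \ref{samplingscheme2} is really clean, which follows directly from the zero patterns in $A_1$, $A^i_2$, $\xi_1$ and $\xi^i_2$.
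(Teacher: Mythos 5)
Your proposal is correct and takes exactly the paper's route: the paper's own proof of Lemma \ref{lemma:injective2} is the one-line statement that it is ``exactly the same as the one of Lemma \ref{lemma:injective1}'', which is precisely the reduction you carry out (and you usefully spell out the decoupling of the $\text{RS}^{(d+2)}$ block and the preservation of the structural ingredients, which the paper leaves implicit).
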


\begin{proof}
    This proof is exactly the same as the one of Lemma \ref{lemma:injective1}.
\end{proof}
Based on Lemma \ref{lemma:injective2} we obtain the following universal approximation result for the terminal difference of the randomised signature for continuous functions of the form \eqref{eq:specialf}.

\begin{proposition}\label{prop:universalityreservoir2}
    Let $f\in\mathcal{C}(K^T)$ be a function of the form \eqref{eq:specialf}. The activation function $\sigma$ satisfies Assumption \ref{ass:assumactivation}. Then under the Sampling Scheme \ref{samplingscheme2}, for any $\alpha\in(0,1)$ and $\varepsilon>0$ there exist $M\in\mathbb{N}$ and $M_1,\dots,M_d\in\mathbb{N}$ such that with probability greater than $\alpha$, there exist $\tilde{w}\in\mathbb{R}^{M}$ and $\tilde{w_i}\in\mathbb{R}^{M_i}$, $i=1,\dots,d$, such that
    $$
        \sup_{x_1,\dots,x_T \in K}\vert f(x_1,\dots,x_T) - \langle w, \Delta \text{RS}_{T}(x_1,\dots,x_T)\rangle\vert < \varepsilon
    $$
    for the output vector $w = [\begin{array}{ccccc} \tilde{w}^T & \tilde{w_1}^T & \hdots & \tilde{w_d}^T & 0_{1\times Td}\end{array}]^T$.
\end{proposition}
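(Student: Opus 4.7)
The plan is to mirror the proof of Proposition \ref{prop:universalityreservoir1}, but now exploit the extra block structure in Sampling Scheme \ref{samplingscheme2} to handle the additional $h_i(x_1,\dots,x_{T-1}) x_T^i$ terms in \eqref{eq:specialf}. Indeed, from \eqref{eq:blockstructureZ^2scheme2} the first block of $\Delta \text{RS}_T(x)$ equals $\sigma(A_1^{(1)}\text{RS}_{T-1}^{(d+2)}(x) + \tilde{\xi}_1)$ and the $(i{+}1)$-th block equals $\sigma(\tilde{A}_2^{i}\text{RS}_{T-1}^{(d+2)}(x) + \tilde{\xi}_2^{i})\, x_T^i$ for $i=1,\dots,d$. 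Hence, with the output vector $w$ of the stated form, $\langle w, \Delta \text{RS}_T(x)\rangle$ equals a random feature neural network evaluated at $\text{RS}_{T-1}^{(d+2)}(x)$ plus $d$ further such networks each multiplied by one coordinate $x_T^i$, which matches the decomposition \eqref{eq:specialf} as soon as $\text{RS}_{T-1}^{(d+2)}(x)$ is identified with a continuous injective image of $(x_1,\dots,x_{T-1})$.

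First, I would invoke Lemma \ref{lemma:injective2} to write $\text{RS}_{T-1}^{(d+2)}(x) = G_{T-1}(x_1,\dots,x_{T-1})$ almost surely, so that $\bar{K} := G_{T-1}(K^{T-1})$ is an a.s.\ compact set and the composed targets $g\circ G_{T-1}^{-1}$ and $h_i\circ G_{T-1}^{-1}$, $i=1,\dots,d$, are continuous on $\bar K$. Second, I would apply Proposition \ref{prop:universalityrandomDNNs} a total of $d+1$ times on $\bar K$, once to each of these targets, using respectively the independent hidden weights $(A_1^{(1)},\tilde{\xi}_1)$ and $(\tilde A_2^{i},\tilde{\xi}_2^{i})$. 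Setting $C := \max_i \sup_{x\in K}|x^i| \vee 1$, I would demand each application to produce a supremum-norm error below $\varepsilon/((d+1)C)$ with probability exceeding $\alpha^{1/(d+1)}$, which determines admissible $M, M_1,\dots, M_d$. Because in Sampling Scheme \ref{samplingscheme2} the weight blocks $(A_1^{(1)},\tilde{\xi}_1),\, (\tilde A_2^{1},\tilde{\xi}_2^{1}),\,\dots,\,(\tilde A_2^{d},\tilde{\xi}_2^{d})$ are jointly independent of each other and of $(A_1^{(2)},\alpha_1,\dots,\alpha_d)$, I would condition on the latter (which fixes $\bar K$ and the targets) and combine via conditional independence to obtain joint success probability above $\alpha$; the tower property then yields the same unconditional bound.

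On the joint success event, a triangle-inequality argument closes the proof: the $g$-approximation contributes at most $\varepsilon/((d+1)C) \le \varepsilon/(d+1)$ and each $h_i$-approximation contributes at most $|x_T^i|\cdot \varepsilon/((d+1)C) \le \varepsilon/(d+1)$, so the total error stays strictly below $\varepsilon$ uniformly on $K^T$. I expect the main obstacle to be the probabilistic bookkeeping rather than any analytic difficulty: the target functions $g\circ G_{T-1}^{-1}, h_i\circ G_{T-1}^{-1}$ and the compact domain $\bar K$ are themselves random, so Proposition \ref{prop:universalityrandomDNNs} cannot be invoked directly but only after conditioning on $(A_1^{(2)},\alpha_1,\dots,\alpha_d)$ and leveraging the independence built into Sampling Scheme \ref{samplingscheme2}. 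This is entirely analogous to the remark at the end of the proof of Proposition \ref{prop:universalityreservoir1}, extended to $d+1$ independent random feature approximators.
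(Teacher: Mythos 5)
Your proposal is correct and follows essentially the same route as the paper: invoke Lemma \ref{lemma:injective2} to obtain the compact image $\bar K$ and the continuous targets $g\circ G_{T-1}^{-1}$, $h_i\circ G_{T-1}^{-1}$, approximate each by an independent random feature network via Proposition \ref{prop:universalityrandomDNNs}, and combine by the triangle inequality (the paper splits the error as $\varepsilon/2$ for $g$ and $\varepsilon/(2d\sup\{|x|:x\in K\})$ for each $h_i$ rather than your symmetric $\varepsilon/((d+1)C)$, which is immaterial). If anything, your explicit conditioning on $(A_1^{(2)},\alpha_1,\dots,\alpha_d)$ and the $\alpha^{1/(d+1)}$ allocation of the success probability across the $d+1$ independent approximators is spelled out more carefully than in the paper, which handles this point only implicitly.
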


\begin{proof}
    Let $\alpha\in(0,1)$ and $\varepsilon > 0$. Consider the Sampling Scheme \ref{samplingscheme2}. As in the proof of Proposition \ref{prop:universalityreservoir1} we have
    \begin{equation}\label{eq:inverseG2}
        G^{-1}_t(\text{RS}^{(d+2)}_t(x)) = G^{-1}_t(G_t(x_1,\dots,x_t)) = (x_1,\dots,x_t)
    \end{equation}
    for any $x\in K^T$. For the block components of $\text{RS}_T(x)$ following Sampling scheme \ref{samplingscheme2} it holds using \eqref{eq:blockstructureZ^2scheme2} 
    \begin{equation}\label{eq:deltaZ^12}
        \Delta \text{RS}^{(1)}_T(x) = \sigma(A_1^{(1)}\text{RS}^{(d+2)}_{T-1}(x) + \tilde{\xi}_1)
    \end{equation}
    and for $i=1,\dots,d$ 
    \begin{equation}\label{eq:deltaZ^i2}
        \Delta \text{RS}^{(i+1)}_T(x) = \sigma(\tilde{A}_2^{i}\text{RS}^{(d+2)}_{T-1}(x) + \tilde{\xi}_2^i)x^i_t.
    \end{equation}
    Since $G_{T-1}$ from Lemma \ref{lemma:injective2} is almost surely continuous, it follows that the set $\bar{K} := G_{T-1}(K^{T-1})$ is compact. Moreover, the functions $g\circ G_{T-1}^{-1}$, $h_i\circ G_{T-1}^{-1}$, $i=1,\dots,d$, are almost surely continuous. Therefore, by Proposition \ref{prop:universalityrandomDNNs} there exist $M,M_1,\dots,M_d\in\mathbb{N}$ such that with probability greater than $\alpha$, there exist $\tilde{w}\in\mathbb{R}^{M}$ and $\tilde{w}_1\in\mathbb{R}^{M_1},\dots, \tilde{w}_d\in\mathbb{R}^{M_d}$ such that with Lemma \ref{lemma:injective2} it holds
    \begin{align}\label{eq:boundapprox1}
        &\sup_{x_1,\dots,x_{T-1}\in K}\vert g(x_1,\dots,x_{T-1}) - \tilde{w}^T\sigma(A_1^{(1)}\text{RS}^{(d+2)}_{T-1}(x) + \tilde{\xi}_1) \vert \nonumber\\[2mm] =& \sup_{x_1,\dots,x_{T-1}\in K}\vert (g\circ G_{T-1}^{-1})(\text{RS}^{(d+2)}_{T-1}(x)) - \tilde{w}^T\sigma(A_1^{(1)}\text{RS}^{(d+2)}_{T-1}(x) + \tilde{\xi}_1) \vert
        < \frac{\varepsilon}{2}
    \end{align}
    and for every $i=1,\dots,d$
    \begin{align}\label{eq:boundapprox2}
        &\sup_{x_1,\dots,x_{T-1}\in K}\vert h_i(x_1,\dots,x_{T-1}) - \tilde{w}_i^T \sigma(\tilde{A}^i_2 \text{RS}_{T-1}^{(d+2)}(x) + \tilde{\xi}^i_2)\vert \nonumber\\[2mm]
        =& \sup_{x_1,\dots,x_{T-1}\in K}\vert (h_i\circ G_{T-1}^{-1})(\text{RS}_{T-1}^{(d+2)}(x))- \tilde{w}_i^T \sigma(\tilde{A}^i_2 \text{RS}_{T-1}^{(d+2)}(x) + \tilde{\xi}^i_2)\vert 
        < \frac{\varepsilon}{2d \sup \{\vert x\vert \mid x \in K\}},
    \end{align}
    Therefore, by choosing $w = [\begin{array}{ccccc} \tilde{w}^T & \tilde{w}_1^T & \hdots & \tilde{w}_d^T & 0_{1\times Td}\end{array}]^T$ we obtain by \eqref{eq:deltaZ^12} and \eqref{eq:deltaZ^i2} that
    $$
        \langle w,\Delta \text{RS}_T(x) \rangle = \tilde{w}^T\sigma(A_1^{(1)}\text{RS}^{(d+2)}_{T-1}(x) + \tilde{\xi}_1) + \sum_{i=1}^d \tilde{w}_i^T \sigma(\tilde{A}^i_2 \text{RS}_{T-1}^{(d+2)}(x) + \tilde{\xi}^i_2)x^i_T.
    $$
    In particular, combining \eqref{eq:boundapprox1} with \eqref{eq:boundapprox2} yields
    \begin{align*}
        &\sup_{x_1,\dots,x_{T}\in K}\vert f(x_1,\dots,x_{T}) - \langle w,\Delta \text{RS}_T(x) \rangle \vert \\[2mm] 
        \leq&\sup_{x_1,\dots,x_{T-1}\in K}\vert g(x_1,\dots,x_{T-1}) - \tilde{w}^T\sigma(A_1^{(1)}\text{RS}^{(d+2)}_{T-1}(x) + \tilde{\xi}_1) \vert \\[2mm]
        \phantom{....}&+ \sum_{i=1}^d \sup_{x_1,\dots,x_{T-1}\in K}\vert h_i(x_1,\dots,x_{T-1}) - \tilde{w}_i^T \sigma(\tilde{A}^i_2 \text{RS}_{T-1}^{d+2}(x) + \tilde{\xi}^i_2)\vert \sup \{\vert x\vert \mid x \in K\} \\[2mm]
        \leq& \frac{\varepsilon}{2} + d \frac{\varepsilon}{2 d \sup \{\vert x\vert \mid x \in K\}}\sup \{\vert x\vert \mid x \in K\} = \varepsilon,
    \end{align*}
    which concludes the proof.
\end{proof}

\section{Time series generation using the randomised signature}\label{Section3}
Consider the compact set $K\subseteq \mathbb{R}^d$ introduced in Section \ref{Section2}. In the following, we denote by $\mathcal{X}:= K\times \dotsb \times K \subseteq (\mathbb{R}^d)^T$. Let $X = (X_t)_{t=1,\dots,T}$ be a $\mathcal{X}$-valued discrete-time stochastic process defined on the probability space with unknown distribution $\mathbb{P}^{\text{real}} := \mathbb{P}\circ X^{-1}$. In this section, we want to generate trajectories of a $\mathcal{X}$-valued stochastic process $\hat{X}^\theta = (\hat{X}^\theta_t)_{t=1,\dots,T}$ with distribution $\mathbb{P}^{\text{fake}}_\theta := \mathbb{P}\circ (\hat{X}^\theta)^{-1}$ such that $d_\text{path}(\mathbb{P}^{\text{real}}, \mathbb{P}^{\text{fake}})$ is small, where $d_\text{path}$ is a (pseudo-)metric on the space of probability measures on $\mathcal{X}$ denoted by $\mathcal{P}(\mathcal{X})$. We will introduce a specific novel choice for $d_\text{path}$ based on randomised signature later this section. In addition, we define $\hat{X}^\theta$ based on randomised signature as follows.

Consider a $n$-dimensional $\mathbb{F}$-Brownian motion $W=(W^1_t,\dots, W^n_t)_{t\in[0,T]}$ and a random vector $V\sim\mathcal{N}_m(0,\text{Id})$ both defined on the probability space $(\Omega,\mathcal{F},\mathbb{F},\mathbb{P})$ for some $n,m\in\mathbb{N}$ and independent of each other. For some $D\in\mathbb{N}$ we consider the $D$-dimensional stochastic process $R = (R_t)_{t = 1,\dots,T}$ and a $d$-dimensional stochastic process $X^\theta =(X^\theta_t)_{t=1,\dots,T}$ defined as
\begin{equation}\label{eq:generatorSDE}
    \begin{cases}
        R_1 = \Psi^\theta(V),\\[2mm]
        R_t = R_{t-1} + \sigma(\rho_1^\theta B_1 R_{t-1} + \rho_2^\theta \lambda_1) + \sum_{i=1}^n \sigma(\rho_3^\theta B^i_2 R_{t-1} + \rho_4^\theta \lambda^i_2)\rho_5^\theta(W^i_{t} - W^i_{t-1}),\\[2mm]
        X^\theta_t = A^\theta_t R_t + \beta^\theta_t, \; t = 1,\dots, T
    \end{cases}
\end{equation}
for a neural network $\Psi^\theta:\mathbb{R}^m \to \mathbb{R}^D$, random matrices $B_1, B^1_2,\dots, B^n_2 \in\mathbb{R}^{D\times D}$, random vectors $\lambda_1, \lambda^1_2,\dots,\lambda^w_2 \in\mathbb{R}^{D}$, parameters $\rho_1^\theta,\dots, \rho_5^\theta\in\mathbb{R}$, $A^\theta_1,\dots,A^\theta_T\in\mathbb{R}^{d\times D}$, $\beta^\theta_1,\dots,\beta^\theta_T\in\mathbb{R}^d$ and some activation function $\sigma:\mathbb{R} \to\mathbb{R}$, which is applied component-wise. The entries of the random matrices are i.i.d.\hspace{-1mm} with some absolutely continuous distribution $\mu_3$ with full support. Moreover, we assume that the random entries are independent of the process $X$ and of $W,V$. The superscript $\theta$ implies that the weights of the neural network $\Psi^\theta$, the parameters $\rho_1^\theta,\dots,\rho_5^\theta$ and the readouts $A^\theta_1,\dots,A^\theta_N,\beta^\theta_1,\dots,\beta^\theta_N$ are trainable. One can think of the process $R$ as a dynamical reservoir system evolving over time. Comparing to \eqref{eq:reservoirsystem}, we see that $R$ is a variant of the randomised signature of $(W_t - W_{t-1})_{t=1,\dots,T}$ with randomly initialised state $R_1$. The value $X^{\theta}_t$, $t=1,\dots,T$, is the output of a linear readout from the state of $R$ at time $t$. For some $r>0$ we consider the following projection defined for any $x\in\mathbb{R}^d$ as
\begin{equation*}
    \pi_r(x) := 
    \begin{cases}
        x,\text{ if } \Vert x \Vert \leq r,\\
        r\frac{x}{\Vert x\Vert}, \text{ if } \Vert x \Vert >r.
    \end{cases}
\end{equation*}
Finally, we define $\hat{X}^\theta_t := \pi_r(X^\theta_t)$ for any $t=1,\dots,T$, where the parameter $r$ is chosen such that $\hat{X}^\theta\in\mathcal{X}$ almost surely. From a practical point of view, the projection can be omitted in the numerical implementation if the subset $\mathcal{X}$ is considered large enough. 

We can formulate the following training problem for generating synthetic trajectories of $X$ using $\hat{X}^\theta$ as 
\begin{equation}\label{eq:trainingproblem}
    \min_\theta d_\text{path}(\mathbb{P}^{\text{real}},\mathbb{P}^{\text{fake}}_\theta).
\end{equation}
Prior to the training procedure, the entries of $B_1,B^1_2,\dots,B^w_2$ and $\lambda_1,\lambda^1_2,\dots,\lambda^w_2$ are drawn from the absolutely continuous distribution $\mu_3$ and then fixed. Hence, the only remaining source of randomness is the Brownian motion $W$ and random vector $V$. In the following, we propose a novel choice for $d_\text{path}$ for the problem \eqref{eq:trainingproblem}. 

\subsection{The randomised signature Wasserstein-1 (RS-$W_1$) metric}
In this section, we introduce a modified version of the well-known Wasserstein-1 metric first introduced in \cite{Kantorovich1960}. Recall that for two probability measures $\mu,\nu\in\mathcal{P}(\mathcal X)$ the dual representation of the $W_1$ distance of Kantorovich and Rubinstein is given by 
\begin{equation}\label{eq:dual_representation_W1}
    W_1(\mu,\nu) = \sup_{\Vert f\Vert_L \leq 1} \mathbb{E}_{\mu}[f(X)] - \mathbb{E}_{\nu}[f(X)],
\end{equation}
where $\Vert\cdot\Vert_L$ denotes the Lipschitz norm defined for any $f:\mathcal{X}\to\mathbb{R}$ as 
\begin{equation*}\label{eq:lipschitz_norm}
    \Vert f\Vert_L := \sup_{x,y\in \mathcal{X}}\frac{\vert f(x) - f(y)\vert}{\Vert x-y\Vert_2}.
\end{equation*}
In the last section, we have seen that the randomised signature is a universal feature map. In other words, we can approximate continuous functions on $\mathcal{X}$ by linear readouts of the terminal difference of the randomised signature. Proposition \ref{prop:universalityreservoir1} and \ref{prop:universalityreservoir2} state that for a large class of functions $f\in\mathcal{C}(\mathcal{X})$ there exist with an arbitrary large probability a dimension $N$ for the randomised signature and a vector $w\in\mathbb{R}^N$ such that $\langle w, \Delta \text{RS}_T(\cdot)\rangle:\mathcal{X} \to \mathbb{R}$ approximates $f$ up to an arbitrarily small error. In particular, this motivates that the Lipschitz-continuous function considered in \eqref{eq:dual_representation_W1} could be approximated as well. Therefore, we propose the following randomised signature Wasserstein-1 metric 
$$
    \text{RS-}W_1(\mu,\nu) := \sup_{w \in\mathbb{R}^M,\; \Vert w\Vert \leq 1}\mathbb{E}_{\mu}[\langle w, \Delta \text{RS}_T(X)\rangle] - \mathbb{E}_{\nu}[\langle w, \Delta \text{RS}_T(X)\rangle]
$$
for any $\mu,\nu\in\mathcal{P}(\mathcal{X})$. Note that the integrability of $\Delta\text{RS}_T(X)$ is ensured by the fact that $X$ and $\hat{X}^\theta$ attain only values in the compact set $\mathcal{X}$ almost surely and the activation function is bounded by Assumption \ref{ass:assumactivation}. In particular, the problem becomes linear. Hence, using the linearity of the expectation and by choosing the Euclidean norm $\Vert \cdot \Vert := \Vert \cdot \Vert_{2}$ we obtain the final expression
\begin{equation}\label{eq:randsigW1}
   \text{RS-}W_1(\mu,\nu) = \Vert \mathbb{E}_{\mu}[\Delta\text{RS}_T(X)] - \mathbb{E}_{\nu}[\Delta\text{RS}_T(X)]\Vert_{2}.
\end{equation}
We use the (pseudo-)metric \eqref{eq:randsigW1} for generating synthetic trajectories of the process $X$ based on $\hat{X}^\theta$ defined in \eqref{eq:generatorSDE}. In the tradition of GAN models, we call the system \eqref{eq:generatorSDE} the \textit{generator} and the metric \eqref{eq:randsigW1} the \textit{discriminator}. Therefore, we consider the following generator-discriminator pair 
\begin{equation}\label{eq:gen_disc_pair_uncond}
    \textbf{Generator: } \hat{X}^\theta \hspace{1cm} \textbf{Discriminator: } \text{RS-}W_1(\mathbb{P}^\text{real},\mathbb{P}^\text{fake}_\theta)
\end{equation}
so that the corresponding training objective \eqref{eq:trainingproblem} can be written as
\begin{equation}\label{eq:rsigW1objective}
    \min_\theta \text{RS-}W_1(\mathbb{P}^{\text{real}}, \mathbb{P}^{\text{fake}}_{\theta}) = \min_\theta \Vert \mathbb{E}_{\mathbb{P}^\text{real}}[\Delta\text{RS}_T(X)] - \mathbb{E}_{\mathbb{P}^\text{fake}_\theta}[\Delta\text{RS}_T(X)]\Vert_{2}.
\end{equation}
It is worth noting that the training problem of traditional GAN models are formulated as a min-max problem. This leads to an adversarial training, which might cause problems in the optimal numerical implementation, since the problem might not converge to a solution as described in \cite{Daskalakis2017, Daskalakis2018}. By considering the RS-W$_1$ metric as the discriminator, we are able to formulate the training objective as a minimisation problem of a convex function, so that the training becomes non-adversarial. A similar approach using signature was pursued in \cite{Ni2021}, \cite{Liao2023}, \cite{Issa2023} and \cite{Lozano2023}.

\subsection{Numerical results}
In this section, we present out-of-sample results for the generator-discriminator pair \eqref{eq:gen_disc_pair_uncond}. In particular, we generate timeseries data based on both synthetic and real world training data. The generator proposed in \eqref{eq:generatorSDE} is compared to the long-short-term-memory (LSTM) model introduced in \cite{Hochreiter1997} for timeseries generation. Moreover, we compare the discriminator metric to the Sig-W$_1$ metric proposed in \cite{Ni2021}. It is worth noting that the Sig-W$_1$ discriminator was tested against other benchmark models as well. We consider the path signature of time, lead-lag and visibility augmented input paths to compute the Sig-$W_1$ metric. We refer to \cite{Ni2021} for more details. To measure the quality of generated paths we compare the corresponding discriminator metric, as well as metrics for the covariance and autocorrelation introduced in Appendix \ref{section:eval_metrics}. We also apply a Shapiro-Wilk test for normality as defined in \cite{Shapiro1965} in case the generated data should be normally distributed based on its training data. For all experiments we use 80\% of the available samples for the training set and 20\% for the test set. Moreover, we apply the Adam algorithm \cite{Kingma2019} for optimising \eqref{eq:generatorSDE}, where each gradient step is based on a batch of randomly selected training paths. Table \ref{table:uncond_hyperparameters} gives on overview of the hyperparameters used. For this entire section, we set $\rho_5^\theta$ in \eqref{eq:generatorSDE} equal to one. However, for longer time horizons and to generate paths with very small variance it proved to enhance the generator's performance to include $\rho_5^\theta$ in the set of trainable parameters. It is worth noting that also the dimension of $N$ of the randomised signature is a hyperparameter and tuned for the corresponding number of timesteps. For experiments with larger time horizons, the choice of $N$ must be increased as well. Heuristically, Sampling Scheme \ref{samplingscheme1} and \ref{samplingscheme2} indicate that the relationship of $N$ and $T$ should be chosen linearly.

\subsubsection{Brownian motion with drift}\label{subsection:experiment_BM_with_drift}
First, we generate trajectories of a discretised Brownian motion with drift $\tilde{W} = (\tilde{W}_t)_{t=1,\dots,T}$ defined as 
\begin{equation}\label{eq:bm_scheme}
\begin{cases}
    \tilde{W}_1 = 0\\
    \tilde{W}_t = \tilde{W}_{t-1} + \mu + \sigma Z_t,\; t=2,\dots,N
\end{cases}
\end{equation}
for a drift parameter $\mu\in\mathbb{R}$, volatility $\sigma>0$ and independent $Z_t\sim\mathcal{N}(0,1)$, $t=2,\dots,T$. In our experiment we study paths of both a Brownian motion with and without drift. The out-of-sample results are summarised in Table \ref{table:results_uncond_bm}.

We observe that for both drift parameters, our proposed generator-discriminator pair generates data with normal marginals due to the Shapiro-Wilk test. Further, based on out-of-sample data, both the covariance and autocorrelation metrics provide the lowest values for our model implying a more realistic dependence structure. It is also worth noting that the Sig-W$_1$ discriminator performs better combined with our generator than the LSTM model. 

\begin{table}[H]
\captionsetup{width=.8\linewidth}
\begin{center}\resizebox{0.8\textwidth}{!}{
\begin{tabular}{c||c|c|c|c} 
    & Train Metric & Cov. Metric & ACF Metric & SW Tests Passed \\[0.5ex]  
    \hline\hline 
    $\mu=0,\;\sigma=1$ & & & & \\[0.5ex]  
    \hline 
     RS-$W_1$ - LSTM & $1.75\times 10^{-1}$ & $2.61\times10^{-1}$ & $1.97\times10^{-1}$ & $6/9$ \\[0.5ex]  
     RS-$W_1$ - NeuralSDE & $\mathbf{1.31\times 10^{-1}}$ & $\mathbf{1.98\times 10^{-1}}$ & $\mathbf{6.64\times 10^{-2}}$ & $\mathbf{9/9}$  \\[0.5ex] 
     \hline
     Sig-$W_1$ - LSTM & $\mathbf{8.15\times10^{-1}}$ & $9.14\times10^{-1}$ & $4.67\times10^{-1}$ & $5/9$ \\[0.5ex]  
     Sig-$W_1$ - NeuralSDE & $9.78\times 10^{-1}$  & $8.76\times 10^{-1} $  & $4.18\times 10{-1}$  & $7/9$ \\[0.5ex]
     \hline\hline 
    $\mu=1,\;\sigma=1$ & & & & \\[0.5ex]  
    \hline 
     RS-$W_1$ - LSTM & $4.41\times 10^{-1}$ & $9.32\times 10^{-1}$ & $3.45\times 10^{-1}$ & $\mathbf{9/9}$  \\[0.5ex]  
     RS-$W_1$ - NeuralSDE & $\mathbf{3.71\times 10^{-1}}$ & $\mathbf{7.94\times10^{-1}}$ & $\mathbf{2.31\times10^{-1}}$ & $\mathbf{9/9}$ \\[0.5ex]  
     \hline
     Sig-$W_1$ - LSTM & $9.81\times10^{-1}$ & $8.55\times 10^{-1}$ & $2.63\times 10^{-1}$ & $8/9$ \\[0.5ex]  
     Sig-$W_1$ - NeuralSDE & $\mathbf{8.23\times10^{-1}}$ & $8.29\times 10^{-1}$ & $3.51\times 10^{-1}$ & $8/9$ \\[0.5ex]  
\end{tabular}}
\end{center}
\caption{Out-of-sample test results for different generator-discriminator pairs generating paths of a Brownian motion.}
\label{table:results_uncond_bm}
\end{table}

The following plot displays $50$ trajectories of the test set and generated paths of the trained generator-discriminator pair \eqref{eq:gen_disc_pair_uncond}.

\begin{center}
    \begin{figure}[H]
        \centering
        \captionsetup{width=.8\linewidth}
        \includegraphics[width=.7\textwidth]{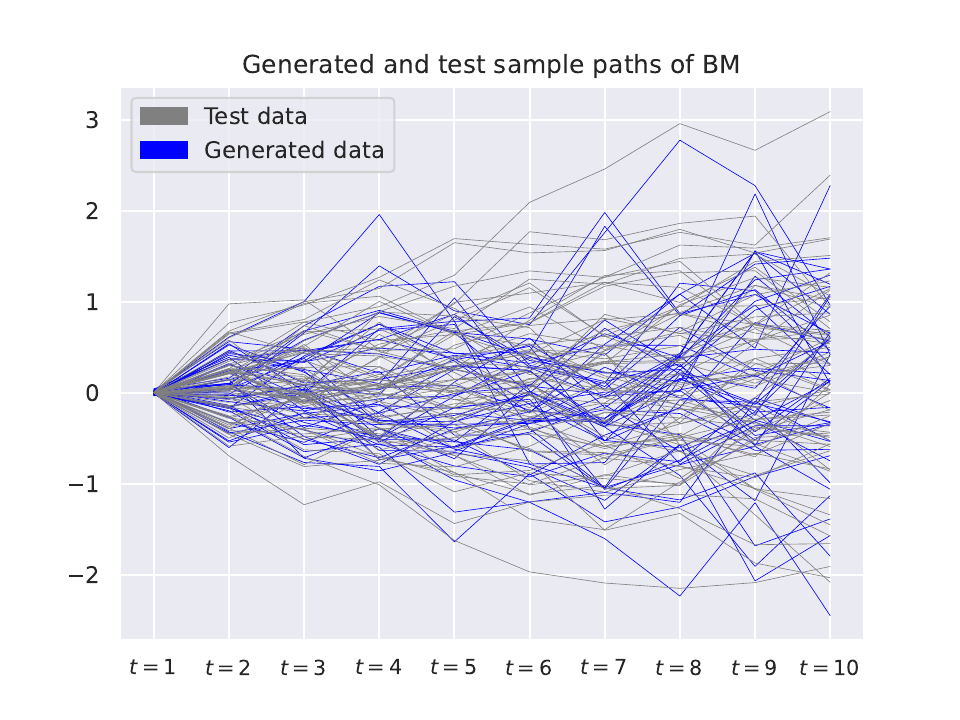}
        \caption{$50$ randomly selected trajectories from both generated and real out-of-sample samples of a Brownian motion with drift $\mu=0$ and variance $\sigma=1$.}
        \label{fig:uncond_BM_paths}
    \end{figure}
\end{center}
For the training of the generator the training set consists of $8000$ and the test set of $2000$ sample paths, respectively. Moreover, the numerical optimisation performed $2500$ gradient steps.

\subsubsection{Autoregressive process}
In this section, we study the performance of the generator-discriminator pair \eqref{eq:gen_disc_pair_uncond} generating trajectories of an autoregressive process. In particular, recall that for some $p\in\mathbb{N}$ a process $X=(X_t)_{t\in\mathbb{R}}$ is an AR$(p)$-process if it is of the form
\begin{equation}\label{eq:AR_process}
    X_t = \sum_{i=1}^{p} \varphi_i X_{t-i} + Z_t,
\end{equation}
where $\varphi_1,\dots,\varphi_p\in\mathbb{R}$ are constants controlling the influence of past observations of the current one and $Z_1,\dots,Z_T\sim\mathcal{N}(0,\sigma^2)$ are i.i.d for some $\sigma>0$. The stationarity of the process $X$ can be characterised by the roots of the autoregressive polynomial 
$$
    \Phi(x) := 1 - \varphi_1 x - \dots - \varphi_p x,\;x\in\mathbb{R}.
$$
In fact, the process $X$ is stationary if and only if the roots of the polynomial $\Phi$ lie outside the unit disc. For example, for $p=1$ the process $X$ is stationary if and only if $\vert \varphi_1\vert < 1$. The training and test set have the same size as in Section \ref{subsection:experiment_BM_with_drift}. Below we present out-of-sample results for an AR($1$)-process for different values of $\varphi_1$. 

\begin{table}[H]
\captionsetup{width=.7\linewidth}
\begin{center}\resizebox{0.7\textwidth}{!}{
\begin{tabular}{c||c|c|c} 
    & Train Metric & Cov. Metric & ACF Metric\\[0.5ex]  
    \hline\hline 
    $p=1,\;\varphi_1 = 0.1$ & & & \\[0.5ex]  
    \hline 
     RS-$W_1$ - LSTM & $8.76\times10^{-2}$ & $2.98\times10^0$ & $7.17\times10^{-1}$  \\[0.5ex]  
     RS-$W_1$ - NeuralSDE & $\mathbf{8.18\times10^{-2}}$ & $\mathbf{2.78\times10^0}$ & $\mathbf{2.11\times10^{-1}}$  \\[0.5ex] 
     \hline
     Sig-$W_1$ - LSTM & $9.58\times10^{-1}$ & $4.83\times10^{0}$ & $6.04\times10^{-1}$  \\[0.5ex]  
     Sig-$W_1$ - NeuralSDE & $\mathbf{9.34\times10^{-1}}$ & $3.61\times10^0$ & $3.48\times10^{-1}$ \\[0.5ex]
     \hline\hline 
    $p=1,\;\varphi_1 = 0.9$ & & & \\[0.5ex]  
    \hline 
     RS-$W_1$ - LSTM & $2.53\times10^{-1}$ & $\mathbf{2.98\times10^0}$ & $3.51\times10^{-1}$ \\[0.5ex]  
     RS-$W_1$ - NeuralSDE & $\mathbf{1.32\times10^{-1}}$ & $3.37\times10^0$ & $\mathbf{3.37\times10^{-1}}$ \\[0.5ex]  
     \hline
     Sig-$W_1$ - LSTM & $8.84\times10^{-1}$ & $7.91\times10^{0}$ & $5.30\times10^{-1}$ \\[0.5ex]  
     Sig-$W_1$ - NeuralSDE & $\mathbf{7.58\times10^{-1}}$ & $5.32\times10^0$ & $5.17\times10^{-1}$ \\[0.5ex]  
     \hline\hline 
     $p=1,\;\varphi_1 = -0.1$ & & & \\[0.5ex]  
    \hline 
     RS-$W_1$ - LSTM & $1.23\times10^{-1}$ & $3.02\times10^0$ & $8.41\times10^{-1}$  \\[0.5ex]  
     RS-$W_1$ - NeuralSDE & $\mathbf{9.11\times10^{-2}}$ & $\mathbf{1.26\times10^0}$ & $\mathbf{1.24\times10^{-1}}$ \\[0.5ex] 
     \hline
     Sig-$W_1$ - LSTM & $9.64\times10^{-1}$ & $4.71\times10^0$ & $5.06\times10^{-1}$ \\[0.5ex]  
     Sig-$W_1$ - NeuralSDE & $\mathbf{9.41\times10^{-1}}$ & $3.00\times10^0$ & $4.37\times10^{-1}$ \\[0.5ex]
     \hline\hline 
     $p=1,\;\varphi_1 = -0.9$ & & & \\[0.5ex]  
    \hline 
     RS-$W_1$ - LSTM & $2.11\times10^{-1}$ & $7.22\times10^0$ & $9.87\times10^{-1}$ \\[0.5ex]  
     RS-$W_1$ - NeuralSDE & $1\mathbf{.68\times10^{-1}}$ & $\mathbf{4.25\times10^0}$ & $\mathbf{9.42\times10^{-1}}$ \\[0.5ex]  
     \hline
     Sig-$W_1$ - LSTM & $1.32\times10^0$ & $7.50\times10^0$ & $1.12\times10^0$ \\[0.5ex]  
     Sig-$W_1$ - NeuralSDE & $\mathbf{9.89\times10^{-1}}$ & $7.38\times10^0$ & $9.85\times10^{-1}$ \\[0.5ex]  
\end{tabular}}
\end{center}
\caption{Out-of-sample test results for different generator-discriminator pairs generating paths of an AR$(1)$ process with different parameter values $\varphi_1$.}
\end{table}

Besides the covariance metric in the case $\varphi_1=0.9$, our proposed model yields the lowest test metrics in all cases compared to the other generator-discriminator pairs. For $\varphi_1=0.9$ the result is not much worse than for the RS-W$_1$ combination and still better than the models using the Sig-W$_1$ metric as the discriminator. Nevertheless, the latter performs better with our proposed generator \eqref{eq:generatorSDE} than the LSTM for all values of $\varphi_1$. We refer to Figure \ref{fig:kde_ar} in the Appendix for plots of kernel density estimated comparing the generated and test data for two marginals.

\subsubsection{S\&P 500 return data}\label{subsection:sp500}
Next, we apply the generator model to real-world financial data. In this section, we consider daily log-returns of the S\&P 500 index based on its closing prices between 01/01/2005 and 31/12/2023. Note that it is often assumed that log-returns are (almost) stationary over time. Therefore, we apply a rolling window to ``cut" the time series into 4316 sample paths of length $10$, which is justified by the stationary behaviour of return data. The training set consists of $80$\% of the samples, while the rest is used for out-of-sample testing. In the following, we present the out-of-sample results on the trained generator for different dimensions $N$ of the randomised signature as defined in \eqref{eq:reservoirsystem}.

\begin{table}[H]
\captionsetup{width=.8\linewidth}
\begin{center}\resizebox{0.8\textwidth}{!}{
\begin{tabular}{c||c|c|c} 
    & Train Metric & Cov. Metric & ACF Metric\\[0.5ex]  
    \hline\hline 
     RS-$W_1$ - LSTM $(N=50)$ & $2.01\times10^{-1}$ & $1.03\times10^0$ & $6.13\times10^{-1}$ \\[0.5ex]  
     RS-$W_1$ - NeuralSDE $(N=50)$ & $\mathbf{1.84\times10^{-1}}$ & $9.84\times10^{-1}$ & $3.32\times10^{-1}$  \\[0.5ex] 
     \hline
     RS-$W_1$ - LSTM $(N=80)$ & $1.99\times10^{-1}$ & $9.12\times10^{-1}$ & $3.82\times10^{-1}$ \\[0.5ex]  
     RS-$W_1$ - NeuralSDE $(N=80)$ & $\mathbf{1.14\times10^{-1}}$ & $\mathbf{8.40\times10^{-1}}$ & $\mathbf{1.16\times10^{-1}}$ \\[0.5ex] 
     \hline
     RS-$W_1$ - LSTM $(N=100)$ & $2.85\times10^{-1}$ & $9.14\times10^{-1}$ & $4.98\times10^{-1}$ \\[0.5ex] 
     RS-$W_1$ - NeuralSDE $(N=100)$ & $\mathbf{1.93 \times 10^{-1}}$ & $9.02\times10^{-1}$ & $4.32\times10^{-1}$ \\[0.5ex] 
     \hline
     Sig-$W_1$ - LSTM & $4.99\times10^0$ & $9.22\times10^0$ & $8.53\times10^{-1}$  \\[0.5ex]  
     Sig-$W_1$ - NeuralSDE & $\mathbf{3.14\times10^0}$ & $9.98\times10^{-1}$ & $3.91\times10^{-1}$ \\[0.5ex]
\end{tabular}}
\end{center}
\caption{Out-of-sample test results for different generator-discriminator pairs generating paths of S\&P 500 log-returns.}
\end{table}

We see that for all generative model containing the RSig-W$_1$ discriminator the choice $N=80$ yields the best result for all metrics. This observation was one reason for choosing this hyperparameter for generating paths of the Brownian motion and the AR(1)-process. In fact for all values of $N$, our proposed model yields lower test metric values than the models using the Sig-W$_1$ discriminator. Lastly, as before we see that the latter performs better paired with the generative model \eqref{eq:generatorSDE} than the LSTM. The plot in Figure \ref{fig:kde_sp500} in the Appendix displays the kernel density estimates for generated and test data. 

\subsubsection{FOREX return data}
Similar to Section \ref{subsection:sp500}, we compute the log-returns of FOREX EUR-USD on the closing rates between 01/11/2023 and 31/01/2024. The used data can be downloaded from \url{https://forexsb.com/historical-forex-data}. Again, based on a stationary assumption, we collect minute-tick data and apply a rolling window the receive 5831 sample paths of length $10$. The training and test set are constructed in the same manner as in Section \ref{subsection:sp500}. The following table summarises out-of-sample results on the trained generator for different dimensions $N$ of the randomised signature as defined in \eqref{eq:reservoirsystem}.

\begin{table}[H]
\captionsetup{width=.8\linewidth}
\begin{center}\resizebox{0.8\textwidth}{!}{
\begin{tabular}{c||c|c|c} 
    & Train Metric & Cov. Metric & ACF Metric\\[0.5ex]  
    \hline\hline 
     RS-$W_1$ - LSTM $(N=50)$ & $3.21\times10^{-1}$ & $2.85\times10^0$ & $8.63\times10^{-1}$ \\[0.5ex]  
     RS-$W_1$ - NeuralSDE $(N=50)$ & $\mathbf{2.53\times10^{-1}}$ & $4.53\times10^0$ & $8.15\times10^{-2}$ \\[0.5ex] 
     \hline
     RS-$W_1$ - LSTM $(N=80)$ & $\mathbf{9.01\times10^{-2}}$ & $3.21\times10^{0}$ & $8.21\times10^{-1}$  \\[0.5ex]  
     RS-$W_1$ - NeuralSDE $(N=80)$ & $1.01\times10^{-1}$ & $\mathbf{2.67\times10^{0}}$ & $\mathbf{6.19\times10^{-2}}$  \\[0.5ex] 
     \hline
     RS-$W_1$ - LSTM $(N=100)$ & $ 1.91\times10^{-1}$ & $3.42\times10^{0}$ & $4.03\times10{-1}$ \\[0.5ex]  
     RS-$W_1$ - NeuralSDE $(N=100)$ & $\mathbf{1.76\times10^{-1}}$ & $ 3.39\times10^0$ & $2.25\times10^{-1}$ \\[0.5ex] 
     \hline
     Sig-$W_1$ - LSTM & $4.36\times10^0$ & $9.23\times10^0$ & $2.98\times10^{-1}$ \\[0.5ex]  
     Sig-$W_1$ - NeuralSDE & $\mathbf{2.85\times10^0}$ & $8.52\times10^0$ & $2.86\times10^{-1}$ \\[0.5ex]
\end{tabular}}
\end{center}
\caption{Out-of-sample test results for different generator-discriminator pairs generating paths of FOREX EUR/USD log-returns.}
\label{tab:results_forex_uncond}
\end{table}

Similar to the results for the S\&P 500 log-returns, Table \ref{tab:results_forex_uncond} indicates that our model yields the best results for both test metrics if the dimension of randomised signature is chosen as $N=80$. Moreover, the model configurations incorporating the RS-W$_1$ metric leads to more realistic results than the Sig-W$_1$ discriminator. The combination Sig-$W_1$ and the generator \eqref{eq:generatorSDE} has lower test metrics compared to the LSTM generator. The kernel density estimates can be found in Figure \ref{fig:kde_forex} in the Appendix. 

\section{Conditional time series generation using the randomised signature}\label{section4}
In this section, we study the problem of generating future trajectories of a path given some information of its past. Let $K\subseteq \mathbb{R}^d$ be a compact subset. For $p,q\in\mathbb{N}$ we denote by $\mathcal{Y}:= K\times\dots\times K\subseteq(\mathbb{R}^d)^p$ and $\mathcal{Z}:= K\times\dots\times K\subseteq(\mathbb{R}^d)^q$. We consider a $\mathcal{Y}\times\mathcal{Z}$-valued adapted stochastic process $X=(X_t)_{t=1,\dots,p,\dots,p+q}$ defined on $(\Omega,\mathcal{F},\mathbb{F},\mathbb{P})$. Moreover, we define 
\begin{equation*}
    X^{\text{past},p} := (X_1,\dots,X_p),\; X^{\text{future},q} := (X_{p+1},\dots,X_{p+q}),
\end{equation*}
hence it holds $X=(X^{\text{past},p},X^{\text{future},q})$. We denote by $\mathbb{P}^r_c(x)$ the conditional distribution of $X^{\text{future},q}$ given the information that $X^{\text{past},p} = x$ for some $x\in\mathcal{Y}$. Similar to the unconditional case in Section \ref{Section3}, our aim is to generate trajectories of a $K^q$-valued random process $\hat{X}^{q,\theta} = (\hat{X}^{q,\theta}_t)_{t=1,\dots,T}$ with conditional distribution $\mathbb{P}^{f,\theta}_c(x)$ on the information $X^{\text{past},p} = x$ such that $d(\mathbb{P}^r_c(x), \mathbb{P}^{f,\theta}_c(x))$ is small, where $d$ is a metric on the space of probability measures on $\mathcal{Z}$ specified later.

Consider the $n$-dimensional $\mathbb{F}$-Brownian motion $W=(W^1_t,\dots,W^n_t)_{t\in[0,T]}$ and the random vector $V\sim\mathcal{N}_m(0,\text{Id})$ defined in Section \ref{Section3}. For some $D\in\mathbb{N}$ and given $x\in\mathcal{Y}$, we consider the $D$-dimensional stochastic process $\tilde R = (\tilde{R}_t)_{t=1,\dots,q}$ and the $d$-dimensional process $\bar{X}^{q,\theta}_t=(\bar{X}^{q,\theta}_t)_{t=1,\dots,q}$ defined as follows
\begin{equation}\label{eq:condgeneratorSDE}
    \begin{cases}
        \tilde{R}_1 = \tilde{\Psi}^\theta(V, \Delta\text{RS}_p(x)),\\[2mm]
        \tilde{R}_t = \tilde{R}_{t-1} + \sigma(\tilde{\rho}_1^\theta \tilde{B}_1 \tilde{R}_{t-1} + \tilde{\rho}_2^\theta \tilde{\lambda}_1) + \sum_{i=1}^n \sigma(\tilde{\rho}_3^\theta \tilde{B}^i_2 \tilde{R}_{t-1} + \tilde{\rho}_4^\theta \tilde{\lambda}^i_2)\tilde{\rho}_5^\theta(W^i_{t} - W^i_{t-1}),\\[2mm]
        \bar{X}^{q,\theta}_t = \tilde{A}^\theta_t \tilde{R}_t + \tilde{\beta}^\theta_t, \; t = 1,\dots, q,
    \end{cases}
\end{equation}
for a neural network $\tilde{\Psi}^\theta:\mathbb{R}^m \times \mathbb{R}^N \to \mathbb{R}^D$, random matrices $\tilde{B}_1, \tilde{B}^1_2,\dots, \tilde{B}^n_2 \in\mathbb{R}^{D\times D}$, $\tilde{\lambda}_1, \tilde{\lambda}^1_2,\dots,\tilde{\lambda}^n_2 \in\mathbb{R}^{D}$, parameters $\tilde{\rho}_1^\theta,\dots, \tilde{\rho}_5^\theta\in\mathbb{R}$, $\tilde{A}^\theta_1,\dots,\tilde{A}^\theta_q\in\mathbb{R}^{d\times D}$, $\tilde{\beta}^\theta_1,\dots,\tilde{\beta}^\theta_q\in\mathbb{R}^d$ and some activation function $\sigma:\mathbb{R}\to\mathbb{R}$ applied component-wise. The entries of the random matrices are i.i.d.\hspace{-1mm} with some absolutely continuous distribution $\mu_4$ with full support and independent of $X$, $W$ and $V$. As before, the superscript $\theta$ denotes the trainable weights of the corresponding neural network or matrix. Finally, the process $\hat{X}^{q,\theta}$ is defined by the projection of $\bar{X}^{q,\theta}$ on a compact set. In particular, for some $r>0$ consider the projection defined for any $x\in\mathbb{R}^d$ as
\begin{equation*}
    \pi_r(x) =
    \begin{cases}
        x, \text{ if } \Vert x\Vert <r,\\
        r\frac{x}{\Vert x\Vert}, \text{ if } \Vert x\Vert\geq r.
    \end{cases}
\end{equation*}
Similar to \eqref{eq:generatorSDE}, the process $\tilde{R}$ is a variant of randomised signature of $(W_t - W_{t-1})_{t=1,\dots,q}$ with initial condition chosen to incorporate the available past information $x$. For any $t=1,\dots,q$ we define $\hat{X}^{q,\theta}_t := \pi_r(\bar{X}^{q,\theta}_t)$, where the constant $r$ is chosen such that $\hat{X}^{q,\theta}\in \mathcal{Z}$. The conditional version of the problem \eqref{eq:trainingproblem} can be written 
\begin{equation}\label{eq:trainingproblem_cond}
    \min_{\theta} d_\text{path}\left(\mathbb{P}^{r}_{c}(x), \mathbb{P}^{f,\theta}_{c}(x)\right)
\end{equation}
for an input path $x\in\mathcal{Y}$. As described before, the entries of the matrices $\tilde{B}_1, \tilde{B}^1_2,\dots, \tilde{B}^n_2$ and vectors $\tilde{\lambda}_1, \tilde{\lambda}^1_2,\dots,\tilde{\lambda}^n_2$ are once sampled from an absolutely continuous distribution and then fixed for the training procedure.

\subsection{The conditional randomised signature Wasserstein-1 (C-RS-$W_1$) metric}
In this section, we introduce a conditional version of the RS-$W_1$ metric introduced in Section \ref{Section3}. Consider the conditional distributions of $\mathbb{P}^{r}_{c}(x)$ and $\mathbb{P}^{f,\theta}_{c}(x)$, as defined above. Based on the construction of the unconditional metric, we define the conditional randomised signature Wasserstein-1 metric as 
$$
    \text{C-RS-}W_1(\mathbb{P}^{r}_{c}(x),\mathbb{P}^{f,\theta}_{c}(x)) := \left\Vert \mathbb{E}_{\mathbb{P}^{r}_{c}(x)}[\Delta\text{RS}_q(X^{\text{future},q})] - \mathbb{E}_{\mathbb{P}^{f,\theta}_{c}(x)}[\Delta\text{RS}_q(X^{\text{future},q})] \right\Vert_2.
$$
Note that integrability of $\Delta\text{RS}_T(X^{\text{future},q})$ is ensured by its definition and the fact that $X^{\text{future},q}$ and $\hat{X}^{q,\theta}$ only attain values on a compact domain and that $\sigma$ is bounded by Assumption \ref{ass:assumactivation}. For an input path $x=(x_1,\dots,x_p)\in\mathcal{Y}$, we obtain the following generator-discriminator pair
\begin{equation}\label{eq:gen_discriminator_pari_cond}
    \textbf{Generator: } \hat{X}^{q,\theta} = (\hat{X}^{q,\theta}_1,\dots,\hat{X}^{q,\theta}_q)\hspace{1cm} \textbf{Discriminator: } \text{C-RS-}W_1(\mathbb{P}^{r}_{c}(x), \mathbb{P}^{f,\theta}_{c}(x))
\end{equation}
using the system \eqref{eq:condgeneratorSDE} as the generator and the C-RS-$W_1$ metric as the discriminator so that the training objective \eqref{eq:trainingproblem_cond} can be written as
\begin{equation}\label{eq:crsigW1objective}
    \min_\theta \left\Vert \mathbb{E}_{\mathbb{P}^{r}_{c}(x)}[\Delta\text{RS}_q(X^{\text{future},q})] - \mathbb{E}_{\mathbb{P}^{f,\theta}_{c}(x)}[\Delta\text{RS}_q(X^{\text{future},q})] \right\Vert_2.
\end{equation}
We devote the following section to describe how the learning problem \eqref{eq:crsigW1objective} is implemented numerically, since in the conditional setting simulating the conditional expectation under $\mathbb{P}^r_c(x)$ for some $x\in\mathcal{Y}$ is not straightforward. A similar approach using path signatures was pursued in \cite{Liao2023}.

\subsection{Practical implementation}
As for the unconditional case, we can estimate $\mathbb{E}_{\mathbb{P}^{f,\theta}_{c}(x)}[\Delta\text{RS}_q(X^{\text{future},q})]$ by a Monte-Carlo sum. For a fixed input path $x$ we generate samples of the future path using \eqref{eq:condgeneratorSDE}, compute their corresponding randomised signatures and take their average. However, this is not possible for the conditional expectation $\mathbb{E}_{\mathbb{P}^{r}_{c}(x)}[\Delta\text{RS}_q(X^{\text{future},q})]$. Note that for a fixed past we have only one future trajectory for the real data so that a Monte-Carlo approach is not applicable. In fact, in typical financial applications we observe only one long datastream/timeseries $x=(x_1,\dots,x_T)\in K^T$ for some $T\in\mathbb{N}$, which can be interpreted as the realisation of some stochastic process. Normally it holds that $T$ is significantly bigger than $p$ and $q$. Therefore, in practice we apply a rolling window to the datastream to obtain the samples 
\begin{equation*}
    x^{\text{past},p}_i := (x_{i-p+1},\dots,x_i)\in\mathcal{Y},\; x^{\text{future},q}_i := (x_{i+1},\dots,x_{i+q})\in\mathcal{Z}
\end{equation*}
for $i=p,\dots,T-q$. Assuming that the observations are stationary in time, we interpret $(x^{\text{past},p}_i)_{i=p}^{T-q}$ as samples of $X^{\text{past},p}$ and $(x^{\text{future},q}_i)_{i=p}^{T-q}$ of $X^{\text{future},q}$ respectively. As described above, for every sample path $x^{\text{past},p}_i$ there is only one corresponding future sample path $x^{\text{future},q}_i$ so that Monte-Carlo techniques fail. Therefore, we use a method proposed in \cite{Levin2013}, which is based on the Doob-Dynkin Lemma: we assume that there exists a measurable function $l:\mathcal{Y}\to\mathbb{R}^N$ such that
\begin{equation}\label{eq:Doob_Dynkin_function}
    l(x^{\text{past},p}_i) = \mathbb{E}_{\mathbb{P}^r_c(x^{\text{past},p}_i)}\left[\Delta\text{RS}_q(X^{\text{future},q})\right]
\end{equation}
for every $i=p,\dots,T-q$. Assuming that $l$ is continuous, it is intuitive to approximate \eqref{eq:Doob_Dynkin_function} by a linear function on $\Delta\text{RS}_q(x^{\text{past},p}_i)$, which is motivated by the universality results presented in Section \ref{Section2}. In particular, approximating $l$ reduces to a linear regression problem. In fact, we assume that for every $i=p,\dots,T-q$ it holds
\begin{equation}\label{eq:linear_regression_problem}
    \Delta\text{RS}_q(x^{\text{future},q}_i) = \alpha  + \beta \Delta\text{RS}_p(x^{\text{past},p}_i) + \varepsilon_i
\end{equation}
for $\alpha\in\mathbb{R}^N$, $\beta\in\mathbb{R}^{N\times N}$ and error terms $\varepsilon_i\in\mathbb{R}^N$. We denote by $\hat{\alpha}\in\mathbb{R}^N,\hat{\beta}\in\mathbb{R}^{N\times N}$ the solution to the problem 
\begin{equation}\label{eq:regression_problem}
    \min_{\alpha\in\mathbb{R}^N,\beta\in\mathbb{R}^{N\times N}}\sum_{i=p}^{T-q} \left\Vert \alpha + \beta \Delta\text{RS}_p(x^{\text{past},p}_i) - \Delta\text{RS}_q(x^{\text{future},q}_i)\right\Vert^2_2.
\end{equation}
Note that the regression problem \eqref{eq:regression_problem} can be solved prior to the actual training problem \eqref{eq:trainingproblem_cond}, which improves the computation time of the problem in general. Once the solution $(\hat{\alpha},\hat{\beta})$ is found, \eqref{eq:crsigW1objective} becomes the following supervised learning problem 
\begin{equation*}
    \min_\theta \left\Vert \hat{\alpha} + \hat{\beta} \Delta\text{RS}_p(x) - \mathbb{E}_{\mathbb{P}^{f,\theta}_c(x)}[\Delta\text{RS}_q(x)]\right\Vert_2.
\end{equation*}
The entire implementation of the training procedure is described in Algorithm \ref{alg:conditional_gen} which can be found in Appendix \ref{section:pseudocode_cond_gen}.

\subsection{Empirical results}
In this section, we present out-of-sample results for the generator-discriminator pair \eqref{eq:gen_discriminator_pari_cond}. In particular, we generate future trajectories of a timeseries data conditional on a path's past for both synthetic and real-world training data. For the generator proposed in \eqref{eq:condgeneratorSDE} we compare the result for using the C-RS-$W_1$ metric as the discriminator to the conditional version of Sig-W$_1$ metric proposed in \cite{Liao2023}. As in Section \ref{Section3}, we state the value of the corresponding discriminator metric and the autocorrelation distance proposed in \eqref{eq:acf_diff} based on out-of-sample test data. Note that the covariance metric cannot be estimated using \eqref{eq:cov_diff} since only one trajectory of the real data is available so that a Monte-Carlo approach fails at this point. For an overview of the hyperparameters used during the training we refer to Table \ref{table:cond_hyperparameters} in the Appendix. Similar to Section \ref{Section3}, we set the parameter $\tilde{\rho}_5^\theta$ in \eqref{eq:condgeneratorSDE} equal to one for the following experiments. The different model configurations are trained with $2500$ gradient steps or until no further improvement was achieved.

\subsubsection{Brownian motion}\label{subsection:conditional_BM_generation}
Similar to Section \ref{subsection:experiment_BM_with_drift}, we train the generator model in \eqref{eq:condgeneratorSDE} to generate sample paths of a Brownian motion conditional on some information about its past. We use \eqref{eq:bm_scheme} to generate training and test paths of length $15$. For each path the first $5$ datapoints are considered as the past, while the remaining $10$ are supposed to be the corresponding future. The training set consists of $8000$ samples and the test set of $2000$. The following table summarises the result for Brownian motion paths with and without drift. Moreover, we use the Shapiro-Wilk test to study if the marginals of the generated paths follow a normal distribution with high statistical confidence. 

\begin{table}[H]
\captionsetup{width=.8\linewidth}
\begin{center}
\resizebox{0.8\textwidth}{!}{
\begin{tabular}{c||c|c|c} 
    & Train Metric & ACF Metric & SW Tests Passed \\[0.5ex]  
    \hline\hline 
    $\mu=0,\;\sigma=1$ & & & \\[0.5ex]  
    \hline 
     C-RS-$W_1$ - NeuralSDE & $1.46\times10^0$  & $\mathbf{3.13\times10^{-2}}$  & $\mathbf{9/10}$  \\[0.5ex] 
     \hline
     C-Sig-$W_1$ - NeuralSDE & $2.28\times10^0$  & $6.25\times10^{-1}$  & $8/10$  \\[0.5ex]
     \hline\hline 
    $\mu=1,\;\sigma=1$ & & & \\[0.5ex]  
    \hline 
     C-RS-$W_1$ - NeuralSDE & $1.71\times10^0$  & $\mathbf{2.68\times10^{-1}}$  & $\mathbf{8/10}$  \\[0.5ex]  
     \hline
     C-Sig-$W_1$ - NeuralSDE & $2.42\times10^0$ & $3.25\times10^{-1}$ & $\mathbf{8/10}$ \\[0.5ex]  
\end{tabular}}
\caption{Out-of-sample results for different generator-discriminator pairs for conditional generation of Brownian motion.}
\label{tab:results_cond_BM}
\end{center}
\end{table}
The results indicate that the model configuration incorporating the C-RS-$W_1$ metric yields the best results for both the autocorrelation metric and the number of normality tests that could not reject the null hypothesis of a normal distribution. Hence, it is indicated that the C-RS-$W_1$ leads to more realistic paths especially in terms of temporal dependence with or without drift. The following plot displays trajectories generated by our proposed model based on the past of a single trajectory. The latter is highlighted in blue together with its actual future behaviour.

\begin{center}
    \begin{figure}[H]
        \centering
        \captionsetup{width=.8\linewidth}
        \includegraphics[width=.6\textwidth]{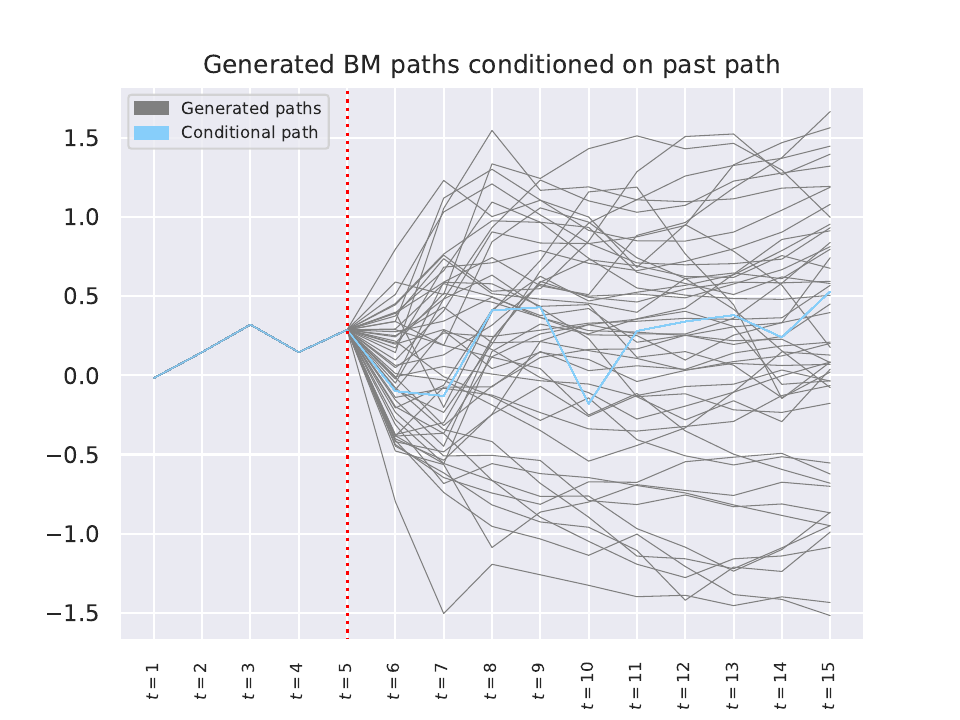}
        \caption{$50$ randomly selected generated future trajectories based on a past path of a Brownian motion with drift $\mu=0$ and variance $\sigma=1$.}
        \label{fig:cond_BM_paths}
    \end{figure}
\end{center}

\subsubsection{S\&P 500 return data}
We generate sample paths of future log-returns of the S\&P 500 index based on past returns. In particular, given the last 5 daily log-returns, we generate possible future scenarios for the coming $10$ days. We use historical data from 01/01/2005 to 31/12/2023 so that the training and test sets combined consist of $3415$ samples produced by a rolling window, where $80$\% are for training and $20$\% for out-of-sample testing. The following table presents out-of-sample results for three different dimensions for the randomised signature in C-RS-$W_1$ and the Sig-$W_1$ metric.

\begin{table}[H]
\captionsetup{width=.8\linewidth}
\begin{center}
\begin{tabular}{c||c|c|c} 
    & Train Metric & ACF Metric \\[0.5ex]  
    \hline\hline 
     C-RS-$W_1$ - NeuralSDE $(N=50)$ & $\mathbf{1.13\times10^0}$  & $\mathbf{8.96\times10^{-1}}$  \\[0.5ex]  
      \hline
     C-RS-$W_1$ - NeuralSDE $(N=80)$ & $1.42\times10^0$  & $9.30\times10^{-1}$  \\[0.5ex]  
      \hline
     C-RS-$W_1$ - NeuralSDE $(N=100)$ & $1.92\times10^0$  & $9.37\times10^{-1}$  \\[0.5ex] 
      \hline
     C-Sig-$W_1$ - NeuralSDE & $9.41\times10^0$  & $1.14\times10^0$  \\[0.5ex]
\end{tabular}
\caption{Out-of-sample results for different generator-discriminator pairs for conditional generation of S\&P 500 log-returns.}
\label{tab:results_cond_SP500ra}
\end{center}
\end{table}
It is indicated that the C-RS-$W_1$ leads to the lowest ACF distances for all three different dimensions $N$. Different to the results before, the best metric is obtained for $N=50$. However, the temporal dependence of the generated paths seems to be very similar for all three cases. 

\subsubsection{FOREX return data}
As for the S\&P 500 index, we also generate future log-return paths for the FOREX rate of Euro and US Dollar for the time period from 01/11/2005 to 31/12/2023. The observed market data yields $5826$ samples, of which $80$\% are used for the training and $20$\% for testing. The following table displays the corresponding training metric applied to the generated data and out-of-sample data as well as the ACF distance. For the C-RS-$W_1$ discriminator we consider three different dimensions $N$.
\begin{table}[H]
\captionsetup{width=.8\linewidth}
\begin{center}
\begin{tabular}{c||c|c|c} 
    & Train Metric & ACF Metric \\[0.5ex]  
    \hline\hline 
     C-RS-$W_1$ - NeuralSDE $(N=50)$ & $5.51\times10^{-1}$ & $9.81\times10^{-1}$ \\[0.5ex] 
      \hline
     C-RS-$W_1$ - NeuralSDE $(N=80)$ & $\mathbf{4.98\times10^{-1}}$ & $\mathbf{7.65\times10^{-1}}$ \\[0.5ex]  
      \hline
     C-RS-$W_1$ - NeuralSDE $(N=100)$ & $7.70\times10^{-1}$ & $1.05\times10^0$ \\[0.5ex]  
      \hline
     C-Sig-$W_1$ - NeuralSDE & $4.41\times10^0$  & $1.17\times10^0$ \\[0.5ex]
\end{tabular}
\caption{Out-of-sample results for different generator-discriminator pairs for conditional generation of FOREX EUR/USD log-returns.}
\label{tab:results_cond_FOREX}
\end{center}
\end{table}

We observe that the best C-RS-$W_1$ metric value is obtained for the configuration with dimension $N=80$, which also leads to the lowest ACF distance. Together with hyperparameter tuning for the experiment for Brownian motion, this observation led to the dimension choice for the results in Section \ref{subsection:conditional_BM_generation}. In all three cases C-RS-$W_1$ seems to yield a more realistic temporal dependence structure than Sig-$W_1$.


\acks{Niklas Walter gratefully acknowledges the financial support of the "Verein zur F\"orderung der Versicherungswissenschaft e.V.".}


\newpage

\appendix
\section{Overview of the training hyperparameters}
The following tables present the hyperparameters used for the training procedures.

\begin{table}[H]
\begin{center}
\begin{tabular}{c|c} 
    (Hyper-)Parameter & Value\\[0.5ex]  
    \hline\hline 
    Learning rate & $10^{-4}$ \\[0.3ex]  
    Batch size (uncond. generator) & 1500 \\[0.3ex]  
    Batch size (cond. generator) & 1000 \\[0.3ex]  
    Maximal number of gradient steps & 2500
\end{tabular}
\caption{Hyperparameters used for the training.}
\label{table:training_hyperparameters}
\end{center}
\end{table}
\vspace{-.3cm}
\begin{table}[H]
\begin{center}
\begin{tabular}{c|c} 
    (Hyper-)Parameter & Value\\[0.5ex]  
    \hline\hline 
    Number of timesteps $T$ & $10$ \\[0.3ex]  
    Dimension of RS $N$ & 80 \\[0.3ex]  
    Dimension of generator reservoir $D$ & 80 \\[0.3ex] 
    Initial noise dimension $m$ & $5$ \\[0.3ex] 
    Signature truncation level $L$ & \hspace{-1mm}$4$ \\[0.3ex] 
    Distribution $\mu_{A,\xi}$ of random weights for RSig & $\mathcal{N}(0,1)$\\[0.3ex]
    Distribution $\mu_{B,\lambda}$ of random weights in \eqref{eq:generatorSDE} & $\mathcal{N}(0,1)$\\[0.3ex]
    Activation function $\sigma$ & Sigmoid\\[0.3ex]
    Input size of LSTM & 5 \\[0.3ex]
    Number of hidden layers of LSTM & 2 \\[0.3ex]
    Dimension of hidden layers in LSTM & $64$ \\[0.3ex]
    Activation function LSTM & Tanh
\end{tabular}
\caption{Hyperparameters used for the unconditional generative model.}
\label{table:uncond_hyperparameters}
\end{center}
\end{table}
\vspace{-.3mm}
\begin{table}[H]
\begin{center}
\begin{tabular}{c|c} 
    (Hyper-)Parameter & Value\\[0.5ex]  
    \hline\hline 
    Number of past steps $p$ & $5$ \\[0.3ex]  
    Number of future steps $q$ & $10$ \\[0.3ex]  
    Dimension of RS $N$ & 80 \\[0.3ex]  
    Dimension of generator reservoir $D$ & 80 \\[0.3ex] 
    Initial noise dimension $m$ & $15$ \\[0.3ex] 
    Signature truncation level $L$ & \hspace{-1mm}$4$ \\[0.3ex]
    Distribution $\mu_{A,\xi}$ of random weights for RSig & $\mathcal{N}(0,1)$\\[0.3ex]
    Distribution $\mu_{\tilde{B},\tilde{\lambda}}$ of random weights in \eqref{eq:condgeneratorSDE} & $\mathcal{N}(0,1)$\\[0.3ex]
    Activation function $\sigma$ & Sigmoid
\end{tabular}
\caption{Hyperparameters used for the conditional generative model.}
\label{table:cond_hyperparameters}
\end{center}
\end{table}

\section{Evaluation metrics}\label{section:eval_metrics}
In this section, we introduce the main test metrics measuring the performance of the different generator-discriminator pairs in Section \ref{Section3}. Using the previous notation, consider two $d$-dimensional stochastic processes $X=(X_t)_{t=1,\dots,T}$ and $X^\theta=(X^\theta_t)_{t=1,\dots,T}$.

\subsection{Covariance difference}
We introduce the following covariance-distance between $X$ and $X^\theta$ as 
\begin{equation}\label{eq:cov_diff}
    \text{Cov-Dist}(X,X^\theta) := \sqrt{\sum_{j,k=1}^d \sum_{s,t=1}^T \left\vert \text{Cov}(X^{j}_s,X^k_t) - \text{Cov}(X^{\theta,j}_s,X^{\theta,k}_t)\right\vert^2},
\end{equation}
where Cov$(X^j_s,X^k_t)$ denotes the covariance between the $j$-th entry of $X$ at time $s$ and the $k$-th entry of $X$ at time $t$. In practice, we observe samples $(x^{(i)})_{i=1}^M$ of $X$ and $(x^{\theta,(i)})_{i=1}^M$ of $X^\theta$ respectively. for some $M\in\mathbb{N}$. Then the covariance terms in \eqref{eq:cov_diff} are estimated using the estimators
\begin{equation*}
    \widehat{\text{Cov}}(X^j_s,X^k_t) = \frac{1}{M}\sum_{i=1}^M (x^{(i),j}_s -  \bar{x}^{j}_s)(x^{(i),k}_t -  \bar{x}^{k}_t)
\end{equation*}
and
\begin{equation*}
    \widehat{\text{Cov}}(X^{\theta,j}_s,X^{\theta,k}_t) = \frac{1}{M}\sum_{i=1}^M (x^{\theta,(i),j}_s -  \bar{x}^{\theta,j}_s)(x^{\theta,(i),k}_t -  \bar{x}^{\theta,k}_t),
\end{equation*}
where $\bar{x}^{j}_s := \frac{1}{M}\sum_{i=1}^M x^{(i),j}_s$ denotes the sample mean.

\subsection{Autocorrelation difference}
The autocorrelation-distance between $X$ and $X^\theta$ is defined as 
\begin{equation}\label{eq:acf_diff}
    \text{ACF-Dist}(X,X^\theta) := \sqrt{\sum_{j=1}^d \sum_{k=1}^{\floor{T/2}}\left\vert \frac{\rho_{X^j}(k)}{\rho_{X^j}(0)}- \frac{\rho_{X^{\theta,j}}(k)}{\rho_{X^{\theta,j}}(0)}\right\vert^2},
\end{equation}
where $\rho_{X^i}(k)$ denotes the autocovariance of the $i$-th entry of $X$ with lag $k$ and $\rho_{X^{\theta,i}}(k)$ of $X^{\theta,i}$ respectively. Let $(x^{(i)})_{i=1}^M$ be samples of the process $X$ and $(x^{\theta,(i)})_{i=1}^M$ of $X^\theta$ respectively. Then the autocovariances can be estimated by 
\begin{equation*}
    \hat{\rho}_{X^j}(k) = \frac{1}{M(T-k)} \sum_{i=1}^M \sum_{t=1}^{T-k} x^{(i),j}_t x^{(i),j}_{t+k} - \frac{1}{M^2(T-k)^2} \left(\sum_{i=1}^M \sum_{t=1}^{T-k} x^{(i),j}_t\right)\left(\sum_{i=1}^M \sum_{t=1}^{T-k} x^{(i),j}_{t+k}\right)
\end{equation*}
and 
\begin{equation*}
    \hat{\rho}_{X^{\theta,j}}(k) = \frac{1}{M(T-k)} \sum_{i=1}^M \sum_{t=1}^{T-k} x^{\theta,(i),j}_t x^{\theta,(i),j}_{t+k} - \frac{1}{M^2(T-k)^2} \left(\sum_{i=1}^M \sum_{t=1}^{T-k} x^{\theta,(i),j}_t\right)\left(\sum_{i=1}^M \sum_{t=1}^{T-k} x^{\theta,(i),j}_{t+k}\right)
\end{equation*}

Note that for the conditional generator we do not have samples for the future real path, since there is only one trajectory. In this case the autocovariance is estimated with $M=1$.

\section{Kernel density estimate plots for the unconditional generator}
The following plots display kernel density estimates (KDEs) for out-of-sample test data and generated data for paths of an AR$(1)$-process, log-returns of S\&P 500 and FOREX EUR/USD exchange rates. The synthetic data was generated by our proposed unconditional generator model with the generator-discriminator pair \eqref{eq:gen_disc_pair_uncond}. In particular, we considered the dimension of the randomised signature in \eqref{eq:randsigW1} to be $N=80$ and the dimension of the reservoir process $R$ in \eqref{eq:generatorSDE} to be $D=80$. The corresponding model was trained with $2500$ gradient steps.

\begin{figure}[H]
    \centering
    \begin{minipage}{.5\textwidth}
        \centering
        \includegraphics[width=\linewidth]{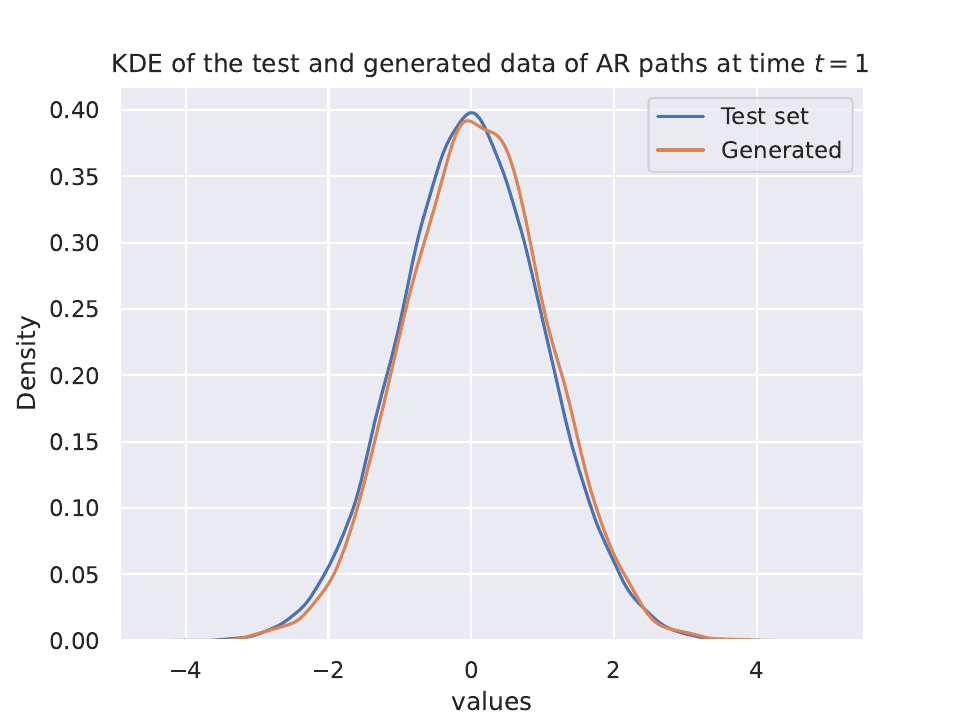}
    \end{minipage}%
    \begin{minipage}{0.5\textwidth}
        \centering
        \includegraphics[width=\linewidth]{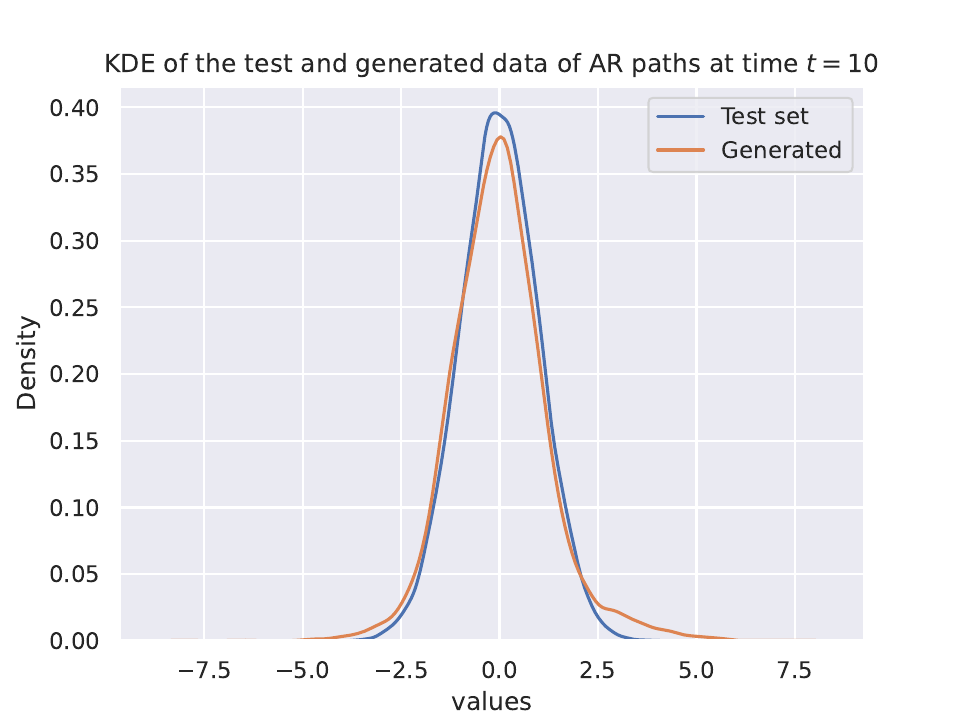}
    \end{minipage}
    \caption{KDEs for generated and test data of values of a AR(1)-process with $\varphi=0.1$ at two different timepoints.}
    \label{fig:kde_ar}
\end{figure}

\begin{figure}[H]
    \centering
    \begin{minipage}{.5\textwidth}
        \centering
        \captionsetup{width=.9\linewidth}
        \includegraphics[width=.9\textwidth]{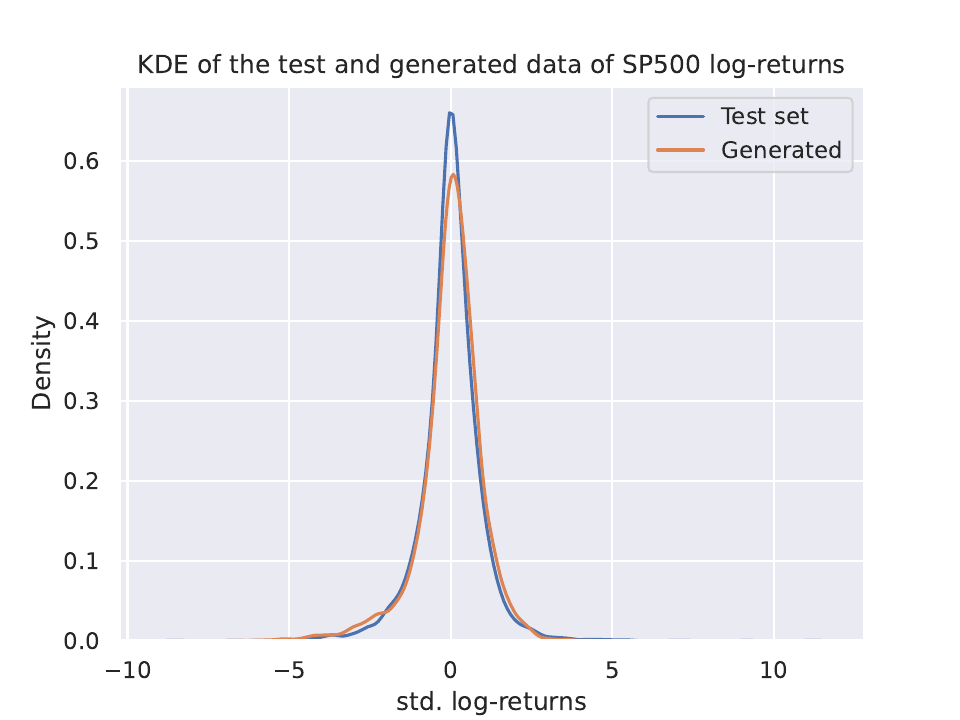}
        \caption{KDEs for generated and test data of S\&P 500 log-returns.}
        \label{fig:kde_sp500}
    \end{minipage}%
    \begin{minipage}{0.5\textwidth}
        \centering
        \captionsetup{width=.9\linewidth}
        \includegraphics[width=.9\textwidth]{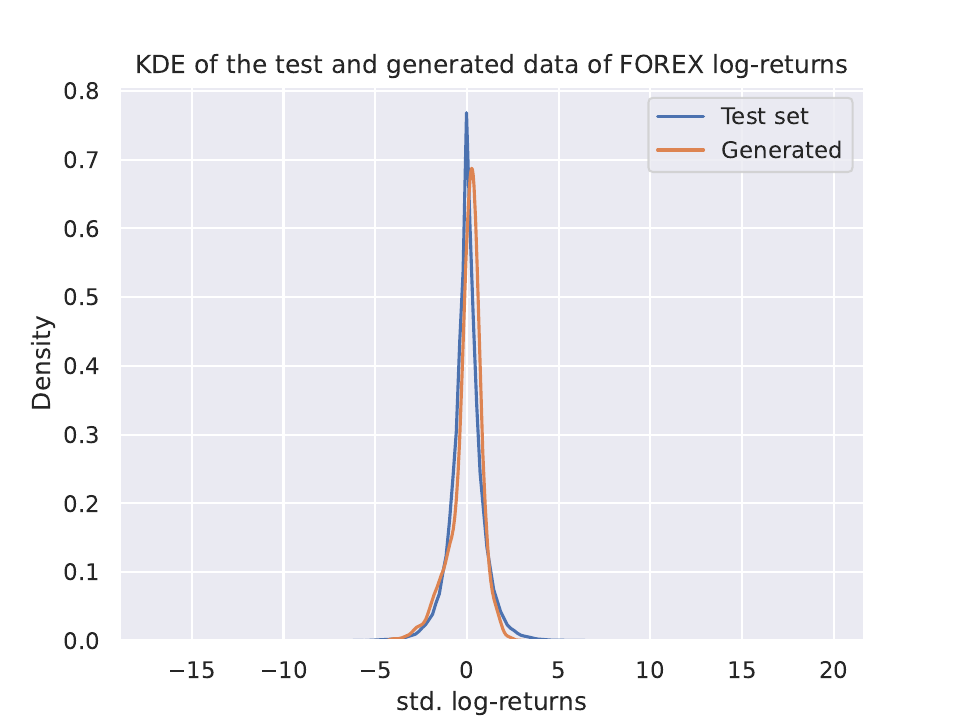}
        \caption{KDEs for generated and test data of FOREX EUR/USD log-returns.}
        \label{fig:kde_forex}
    \end{minipage}
\end{figure}

\section{Pseudocode for the conditional generator}\label{section:pseudocode_cond_gen}
Inspired by \cite{Liao2023}, the following pseudocode describes the implementation of the conditional generator proposed in Section \ref{section4}.
\begin{algorithm}
\caption{Conditional Generator}\label{alg:conditional_gen}
\hspace*{\algorithmicindent} \textbf{Input:} Length of the past path $p$, length of the future path $q$, number of samples $M$, samples $(x_i^{\text{past},p})_{i=1}^M$ of $X^{\text{past},p}_{real}$ and $(x_i^{\text{future},q})_{i=1}^M$ of $X^{\text{future},q}_{real}$, dimension of the randomised signature $N$, sampled random matrices $\tilde{B}_1, \tilde{B}^1_2,\dots, \tilde{B}^w_2 \in\mathbb{R}^{N\times N}, \tilde{\lambda}_1, \tilde{\lambda}^1_2,\dots,\tilde{\lambda}^w_2 \in\mathbb{R}^N$, learning rate $\eta$, number of training epochs $N_{epochs}$, batch size $B$\\
 \hspace*{\algorithmicindent} \textbf{Output:} Optimal parameters for the generator $\hat{X}^{\theta,q}$ 
\begin{algorithmic}[1]
    \State Compute $\Delta \text{RS}_p(x_i^{\text{past},p})_{i=1}^M$ and $\Delta \text{RS}_q(x_i^{\text{future},q})_{i=1}^M$.

    \State Solve the OLS problem 
    $$
        \Delta \text{RS}_q(x_i^{\text{future},q}) = \alpha + \beta \Delta \text{RS}_p(x_i^{\text{past},p}) + \varepsilon_i
    $$  

    for every sample $i=1,\dots,M$ to obtain the optimal $(\hat{\alpha},\hat{\beta})$.

    \State Initialise parameters $\theta$ of the generator $\hat{X}^{\theta}$.
    \For{$i=1,\dots,N_{epochs}$}
        \State Initialise loss $\ell(\theta) \gets 0$.
        \State Sample training batch $\{x_j^{\text{past},p}\}_{j=1,\dots,B}$
        \For{$j=1,\dots, B$}
            \State Generate $M$ samples $(x_i^{\text{future},q})_{i=1,\dots,M}$ of the future path using the current configuration of generator $\hat{X}^{\theta}$ given $x_j^{\text{past},p}$
            \State Compute the MC sum
            $$
                \frac{1}{M}\sum_{i=1}^M \Delta \text{RS}_q(x_i^{\text{future},q}). 
            $$
            \State Update the loss
            $$
                \ell(\theta) \gets \ell(\theta) + \left\Vert \hat{\alpha} + \hat{\beta} x_j^{\text{past},p} - \frac{1}{M}\sum_{i=1}^M \Delta \text{RS}_q(x_i^{\text{future},q})] \right\Vert_2
            $$
            \EndFor
        \State Update the parameters
        $$
            \theta \gets \theta - \eta \frac{\partial \ell(\theta)}{\partial \theta}.
        $$
    \EndFor
    \State \Return {$\theta$} 
\end{algorithmic}
\end{algorithm}

\newpage

\vskip 0.2in
\bibliography{sample}

\end{document}